\newtheorem{theorem}{Theorem}
\newtheorem{lemma}{Lemma}
\newtheorem{proposition}{Proposition}
\newtheorem*{assumptionP}{Assumption 1.P}
\newtheorem*{assumptionQ}{Assumption 1.Q}
\def\eqref#1{equation~\ref{#1}}
\def\1{\bm{1}}
\DeclareMathAlphabet{\mathsfit}{\encodingdefault}{\sfdefault}{m}{sl}
\SetMathAlphabet{\mathsfit}{bold}{\encodingdefault}{\sfdefault}{bx}{n}
\newcommand{\setting}[1]{\texttt{#1}} 
\definecolor{pPosOne}{HTML}{081D58}      
\definecolor{pPosHalf}{HTML}{233390}     
\definecolor{pZero}{HTML}{2259A6}        
\definecolor{pNegHalf}{HTML}{1E8ABD}     
\definecolor{pNegOne}{HTML}{3AAEC3}      
\definecolor{pNegOneHalf}{HTML}{71C8BD}  
\definecolor{pNegTwo}{HTML}{B2E1B6}      
\definecolor{skyblue}{HTML}{1f77b4}   
\definecolor{orange}{HTML}{ff7f0e}    
\definecolor{green}{HTML}{2ca02c}   
\definecolor{red}{HTML}{d62728}   
\definecolor{purple}{HTML}{9467bd}    
\definecolor{pink}{HTML}{8c564b}   
\definecolor{brown}{HTML}{e377c2}   
\definecolor{tp0}{HTML}{081d58}
\definecolor{tp05}{HTML}{24419a}
\definecolor{tp1}{HTML}{1e80b8}
\definecolor{tp15}{HTML}{40b5c3}
\definecolor{tp2}{HTML}{97d6b8}
\title{How Does Preconditioning Guide \\  Feature Learning in Deep Neural Networks?}
\author{\textbf{Kotaro Yoshida}$^1$
\thanks{This work was conducted as part of a research internship at A*STAR.} \ 
\thanks{Corresponding author \href{mailto:yoshida.k.0253@m.isct.ac.jp}{\nolinkurl{yoshida.k.0253@m.isct.ac.jp}}} \quad
    \textbf{Atsushi Nitanda}$^{2,3}$ \\
     $^1$Institute of Science Tokyo \quad
     $^2$Agency for Science, Technology and Research (A*STAR) \quad \\
     $^3$Nanyang Technological University
}
\begin{document}

\maketitle
\begin{abstract}
Preconditioning is widely used in machine learning to accelerate convergence on the empirical risk, yet its role on the expected risk remains underexplored.
In this work, we investigate how preconditioning affects feature learning and generalization performance. We first show that the input information available to the model is conveyed solely through the Gram matrix defined by the preconditioner’s metric, thereby inducing a controllable spectral bias on feature learning. Concretely, instantiating the preconditioner as the $p$-th power of the input covariance matrix and within a single-index teacher model, we prove that in generalization, the exponent $p$ and the alignment between the teacher and the input spectrum are crucial factors. We further investigate how the interplay between these factors influences feature learning from three complementary perspectives: (i) Robustness to noise, (ii) Out-of-distribution generalization, and (iii) Forward knowledge transfer. Our results indicate that the learned feature representations closely mirror the spectral bias introduced by the preconditioner---favoring components that are emphasized and exhibiting reduced sensitivity to those that are suppressed. Crucially, we demonstrate that generalization is significantly enhanced when this spectral bias is aligned with that of the teacher.
\end{abstract}

\section{Introduction}
In machine learning, preconditioning has traditionally aimed to accelerate convergence and reduce computational costs during training. By utilizing second-order information, such as the Hessian~\citep{lecun2002efficient} or the Fisher Information Matrix (FIM)~\citep{amari1998natural}, researchers have sought to navigate ill-conditioned loss landscapes~\citep{li2018visualizing} more efficiently.
Yet, the ultimate objective extends beyond convergence speed to achieving strong generalization on unseen data. This generalization fundamentally hinges on the nature of the feature representations learned during training~\citep{bengio2013representation,yosinski2014transferable,lecun2015deep}. 
Although much of the literature has focused on accelerating convergence on the empirical risk, the impact on the expected-risk performance has often been underexplored~\citep{dauphin2014identifying,keskar2016large,wilson2017marginal}. This motivates the question: how does preconditioning affect solution quality—namely, \emph{what kinds of features are captured and how they contribute to generalization}?

The answer to this question remains controversial and has yet to be systematically established. One perspective suggests that preconditioning, particularly with second-order methods, can be detrimental. This view is supported by arguments that such methods converge to sharp solutions that generalize poorly~\citep{keskar2016large,shin2025sassha}, or that they are equivalent to data whitening, which can discard useful information \citep{wadia2021whitening}. An opposing view, however, highlights the benefits, showing that preconditioning can enhance robustness to label noise~\citep{amari2020does}, stabilize feature learning~\citep{ishikawa2024on}, and better handle anisotropic data~\citep{zhang2025on}. It is also argued, from a middle-ground perspective and aligning with the {No Free Lunch} theorem in optimization~\citep{585893}, that the effectiveness of a preconditioner is not universal, but rather depends on the alignment of its inherent inductive bias with the specific structure of the task \citep{amari2020does}.

Despite these insights, a systematic understanding of the interplay between preconditioning and feature learning in deep learning is still lacking. In this paper, we aim to systematize the relationship between preconditioning and generalization in deep learning. 
First, we extend the arguments of \citet{wadia2021whitening} to a general preconditioning framework. We demonstrate that neuron-wise preconditioning in the first layer determines the similarity metric of the Gram matrix, which uniquely conveys input data information to the model during both training and inference. In other words, the model can only access input information through the Gram matrix defined by the preconditioner's metric; the preconditioner directly shapes this similarity space, thereby directly influencing learning and inference. 
Next, we instantiate this insight by approximating the first-layer preconditioner as the $p$-th power of the input covariance matrix. We show that the exponent $p$ determines the similarity geometry of the Gram matrix, selectively emphasizing features with specific variance levels. Specifically, in this geometry, a larger $p$ makes high-variance features more dominant, while a smaller $p$ emphasizes low-variance components. Consequently, by considering a single-index teacher model, we conclude that, for a fixed input spectrum, test-time generalization is only governed by both the preconditioner’s power $p$ and the degree of alignment between the teacher and the input spectrum, as well as the level of label noise.

We further elucidate how the interplay between the preconditioner’s power $p$ and the alignment between the teacher and the input spectrum governs model generalization, examining this relationship from three perspectives: (i) \textbf{Robustness to noise.} We construct synthetic datasets where a teacher model is aligned with either high- or low-variance components. By training a two-layer MLP with preconditioners based on both the exact covariance matrix and an approximate Hessian, we find that the model becomes sensitive to the variance components emphasized by $p$, and that the value of $p$ yielding the most robust generalization is the one that matches the variance components to which the teacher is aligned. (ii) \textbf{Out-of-distribution (OOD) generalization.} The trend observed in (i) becomes particularly salient under a correlation shift, where the data contains both invariant and spurious features. We find that OOD generalization is improved when the power $p$ is chosen to emphasize the variance components associated with invariant features while suppressing those associated with spurious features. This is confirmed by comparing various optimizers on an MNIST-based task where small-variance Gaussian noise is added. (iii) \textbf{Forward knowledge transfer.} We connect our framework to transfer and continual learning, where leveraging knowledge from past tasks is crucial. We argue that using an optimizer with a strong spectral bias on a source task can inadvertently discard information vital for subsequent tasks, thereby degrading transfer/continual learning performance. Therefore, we show that setting $p = -1$, which transparently incorporates all spectral components during learning on the source task, is preferable for improving transferability to future tasks. We confirm this phenomenon under the same experimental settings as those used in (i).
The code is available at \url{https://github.com/katoro8989/preconditioning-feature-learning}.

\textbf{In summary, our main contributions are as follows: }
\begin{itemize}
\item We provide a theoretical result: when training deep models with preconditioning, all input information that can affect feature learning is extracted solely through the Gram matrix defined by the similarity geometry of the preconditioning matrix (Sec.~\ref{subsec:newron_wise_precon}).
\item We instantiate this result to preconditioning based on the $p$-th power of the input covariance matrix and show that, in the induced Gram matrix, larger $p$ gives greater influence to high eigenvalue components whereas smaller $p$ gives greater influence to low eigenvalue components. Under a single-index teacher model, when the data spectrum is fixed, the model’s generalization is determined only by $p$ and the alignment between the teacher and the input spectrum (and label noise) (Secs.~\ref{subsec:precon_eigen} and \ref{subsec:label_alignment}).
\item We further examine how the relationship between the preconditioner power $p$ and the alignment between the teacher and the input spectrum impacts generalization through three lenses: robustness to noise, OOD generalization, and forward knowledge transfer, and we find that in feature learning the components emphasized by the preconditioner are preferentially learned whereas the suppressed components are downweighted; generalization improves when this spectral bias is aligned with the teacher’s (Sec.~\ref{sec:exps}).
\end{itemize}

\section{Related Work}
\paragraph{Second-order optimization.}
In large-scale deep learning, training efficiency is critical, and second-order optimization has long been pursued to speed up convergence. These methods reshape the gradient via preconditioning derived from the loss Hessian, but computing the Hessian and its inverse is prohibitively expensive for modern networks; consequently, practical approaches rely on approximations. Representative examples include limited-memory quasi-Newton methods such as L-BFGS~\citep{2ca50d4f7af74aaca06511540a81ef8b}, Hessian-free optimization solved with matrix--vector products and conjugate gradients~\citep{martens2010deep}, Shampoo~\citep{gupta2018shampoo}, and lightweight curvature estimators that exploit diagonal or low-rank structure such as AdaHessian~\citep{yao2021adahessian} and Sophia~\citep{liu2024sophia}. Natural gradient descent~\citep{amari1998natural} replaces the Hessian with the inverse FIM, with common approximations including diagonal, block-diagonal, and Kronecker factorizations~\citep{kingma2014adam,martens2015optimizing} to ease the computational burden.

\paragraph{Preconditioning and generalization.}
It is well known that the choice of optimizer can materially affect generalization~\citep{luo2018adaptive,pascanu2025optimizers}. A first line of evidence suggests that second-order preconditioning may be detrimental: motivated by the observation that flatter solutions tend to generalize better~\citep{keskar2016large}, sharpness-aware methods explicitly bias training toward flat minima~\citep{foret2021sharpnessaware,shin2025sassha}. In linear models, preconditioning with second-order information reduces to data whitening, which has been argued to be harmful because it can reduce the information available for generalization~\citep{wadia2021whitening}; nevertheless, in NLP, whitening word or sentence embeddings has sometimes improved performance~\citep{su2021whitening,yokoi2024zipfian}.
On the other hand, a growing body of work highlights benefits of preconditioning: natural gradient exhibits optimal robustness to label noise~\citep{amari2020does}, certain parameterizations yield more stable feature learning than gradient descent (GD)~\citep{ishikawa2024on}, and K-FAC can outperform GD when the input distribution is anisotropic~\citep{zhang2025on}. Consistent with {No Free Lunch} theorem in optimization~\citep{585893}, it is indicated that {alignment} between the preconditioner and the target labels is essential for good generalization in overparameterized linear models~\citep{amari2020does}; under covariate shift, the optimal preconditioner can be characterized and varies with the target distribution~\citep{liu2025optimal}.

 \section{Theoretical Analysis}
 \label{sec:theoretical_insights}
\paragraph{Setup.}
We consider a data distribution $\mu$ on the product space $\mathcal{X}\times\mathcal{Y}$, with $\mathcal{X}\subseteq\mathbb{R}^{d_x}$ and $\mathcal{Y}\subseteq\mathbb{R}^{d_y}$. The training sample is $\mathcal{D}_{\text{train}}=\{(\boldsymbol{x}_i,\boldsymbol{y}_i)\}_{i=1}^N$ with $(\boldsymbol{x}_i,\boldsymbol{y}_i)\stackrel{\text{i.i.d.}}{\sim}\mu$, and stacking inputs and labels columnwise gives $\boldsymbol{X}_{\text{train}}=[\boldsymbol{x}_1,\dots,\boldsymbol{x}_N]\in\mathbb{R}^{d_x\times N}$ and $\boldsymbol{Y}_{\text{train}}=[\boldsymbol{y}_1,\dots,\boldsymbol{y}_N]\in\mathbb{R}^{d_y\times N}$, and we assume $d_x \le N$. We study a model $f(\cdot;\boldsymbol{W}_1,\boldsymbol{\theta}_{2:L}):\mathbb{R}^{d_x}\to\mathbb{R}^{d_y}$ whose first layer is fully connected with column-vector convention $\boldsymbol{z}=\boldsymbol{W}_1^\top \boldsymbol{x}\in\mathbb{R}^{d_h}$ for $\boldsymbol{W}_1\in\mathbb{R}^{d_x\times d_h}$, and the remaining layers $g_{2:L}(\cdot;\boldsymbol{\theta}_{2:L}):\mathbb{R}^{d_h}\to\mathbb{R}^{d_y}$ satisfy $f(\boldsymbol{x};\boldsymbol{W}_1,\boldsymbol{\theta}_{2:L})=g_{2:L}(\boldsymbol{W}_1^\top \boldsymbol{x};\boldsymbol{\theta}_{2:L})$ for $\boldsymbol{\theta}_{2:L} \in \mathbb{R}^{d_{\theta}}$. Let $\ell:\mathbb{R}^{d_y}\times\mathbb{R}^{d_y}\to\mathbb{R}$ be a per-sample loss; the empirical risk minimized during training is $L(\boldsymbol{X},(\boldsymbol{W}_1,\boldsymbol{\theta}_{2:L}))=\frac{1}{N}\sum_{i=1}^N \ell\big(f(\boldsymbol{x}_i;\boldsymbol{W}_1,\boldsymbol{\theta}_{2:L}),\boldsymbol{y}_i\big)$.

\subsection{The Impact of Preconditioned Similarity Geometry on Feature Learning}\label{subsec:newron_wise_precon}
We first consider the following neuron-wise preconditioned update on the first layer. 
At step $t$, the update 
\vspace{-1em}
\begin{align}
  \boldsymbol{W}_1^{(t+1)}
  &= \boldsymbol{W}_1^{(t)} - \eta\, \boldsymbol{P}^{(t)} \frac{\partial L^{(t)}}{\partial \boldsymbol{W}_1^{(t)}} \\
  &= \boldsymbol{W}_1^{(t)} - \eta\, \boldsymbol{P}^{(t)}\, \boldsymbol{X}_{\text{train}}
     \left(\frac{\partial L^{(t)}}{\partial \boldsymbol{Z}_{\text{train}}^{(t)}}\right)^{\!\top},
\end{align}
where $\boldsymbol{P}^{(t)} \in \mathbb{R}^{d_x \times d_x}$ is positive semi-definite,  
$\boldsymbol{Z}_{\text{train}}^{(t)} = \boldsymbol{W}_1^{(t)\top} \boldsymbol{X}_{\text{train}}$,  
and $L^{(t)} = L\big(\boldsymbol{X}_{\text{train}}, (\boldsymbol{W}_1^{(t)}, \boldsymbol{\theta}_{2:L}^{(t)})\big)$.  
The remaining parameters are updated by  
$\boldsymbol{\theta}_{2:L}^{(t+1)} = \boldsymbol{\theta}_{2:L}^{(t)} - \eta\, \boldsymbol{Q}^{(t)} \frac{\partial \boldsymbol{L}^{(t)}}{\partial \boldsymbol{\theta}_{2:L}^{(t)}}$  
with a positive semi-definite preconditioner $\boldsymbol{Q}^{(t)} \in \mathbb{R}^{d_{\theta} \times d_{\theta}}$.

Define the preconditioned Gram at step $t$ as  
$\boldsymbol{G}_P^{(t)} := \boldsymbol{X}_{\text{train}}^\top \boldsymbol{P}^{(t)} \boldsymbol{X}_{\text{train}} \in \mathbb{R}^{N\times N}$.  
The hidden states in the first layer are then updated by
\vspace{-.7em}
\begin{equation}\label{eq:z_train_update}
  \boldsymbol{Z}_{\text{train}}^{(t+1)} \;=\; 
  \boldsymbol{Z}_{\text{train}}^{(t)} - \eta\,\frac{\partial L^{(t)}}{\partial \boldsymbol{Z}_{\text{train}}^{(t)}}\, \boldsymbol{G}_P^{(t)}.
\end{equation}
\vspace{-1.2em}

Building on this, Theorem~2.1.1 of \citet{wadia2021whitening} extends to the following form.

\begin{assumptionP}\label{assum:p}
For each $t\ge 0$, there exists a time-dependent measurable function $\Phi_t$ such that
$$\boldsymbol{P}^{(t)}=\Phi_t (\boldsymbol{X}_{\mathrm{train}}).$$
\end{assumptionP}

\begin{assumptionQ}\label{assum:q}
For each $t\ge 0$, there exists a time-dependent measurable function $\Psi_t$ such that 
$$\boldsymbol{Q}^{(t)}=\Psi_t\!\big(\boldsymbol{Z}^{(t)},\,\boldsymbol{\theta}_{2:L}^{(t)},\,\{\boldsymbol{G}_P^{(s)}\}_{s=0}^{t-1},\,\boldsymbol{Y}_{\mathrm{train}}\big).$$
\end{assumptionQ}
Assumptions 1.P and 1.Q state that $\boldsymbol{P}^{(t)}$ and $\boldsymbol{Q}^{(t)}$ depend only on their displayed arguments.

\begin{theorem}[Extension of Theorem 2.1.1 of \citet{wadia2021whitening}]\label{theo1}
Assume that the initial preconditioner $\boldsymbol{P}^{(0)}$ is an arbitrary positive semi-definite matrix initialized independently of $\boldsymbol{W}_1^{(0)}$, and that the first layer is initialized in a $\boldsymbol{P}^{(0)}$-isotropic manner.
Then, under Assumptions~1.P and 1.Q, for all $t\ge 1$, in terms of the mutual information $I(\cdot;\cdot)$,
\begin{equation}\label{eq:train_independent}
I\!\big((\boldsymbol{Z}_{\text{train}}^{(t)}, \boldsymbol{\theta}_{2:L}^{(t)});\, \boldsymbol{X}_{\text{train}}
 \,\big|\, \{\boldsymbol{G}_P^{(s)}\}_{s=0}^{t-1},\, \boldsymbol{Y}_{\text{train}}\big) \;=\; 0.
\end{equation}
\end{theorem}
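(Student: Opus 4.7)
My plan is to prove the claim by induction on $t$, exhibiting at each step a deterministic map that writes $(\boldsymbol{Z}_{\text{train}}^{(t)}, \boldsymbol{\theta}_{2:L}^{(t)})$ as a measurable function of quantities which, conditioned on $\{\boldsymbol{G}_P^{(s)}\}_{s=0}^{t-1}$ and $\boldsymbol{Y}_{\text{train}}$, carry no information about $\boldsymbol{X}_{\text{train}}$ beyond what the conditioning $\sigma$-algebra already contains. Once that structural factorization is in place, the conditional data processing inequality (DPI) immediately forces the conditional mutual information in (\ref{eq:train_independent}) to vanish.

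For the base case $t=1$, I would first argue that under the $\boldsymbol{P}^{(0)}$-isotropic initialization the conditional law of $\boldsymbol{Z}_{\text{train}}^{(0)} = \boldsymbol{W}_1^{(0)\top} \boldsymbol{X}_{\text{train}}$ given $(\boldsymbol{X}_{\text{train}}, \boldsymbol{P}^{(0)})$ depends on $\boldsymbol{X}_{\text{train}}$ only through $\boldsymbol{G}_P^{(0)} = \boldsymbol{X}_{\text{train}}^\top \boldsymbol{P}^{(0)} \boldsymbol{X}_{\text{train}}$; for instance, with columns of $\boldsymbol{W}_1^{(0)}$ drawn i.i.d.\ from $\mathcal{N}(0, \boldsymbol{P}^{(0)})$ one obtains $\mathrm{vec}(\boldsymbol{Z}_{\text{train}}^{(0)}) \mid (\boldsymbol{X}_{\text{train}}, \boldsymbol{P}^{(0)}) \sim \mathcal{N}(0, \boldsymbol{I}_{d_h} \otimes \boldsymbol{G}_P^{(0)})$. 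Combined with Assumption~1.P (so that $\boldsymbol{G}_P^{(0)}$ is $\sigma(\boldsymbol{X}_{\text{train}})$-measurable) and with $\boldsymbol{\theta}_{2:L}^{(0)}$ initialized independently of the data, this yields $I((\boldsymbol{Z}_{\text{train}}^{(0)}, \boldsymbol{\theta}_{2:L}^{(0)}); \boldsymbol{X}_{\text{train}} \mid \boldsymbol{G}_P^{(0)}) = 0$. The update (\ref{eq:z_train_update}), together with $\boldsymbol{\theta}_{2:L}^{(1)} = \boldsymbol{\theta}_{2:L}^{(0)} - \eta\, \boldsymbol{Q}^{(0)}\, \partial L^{(0)}/\partial \boldsymbol{\theta}_{2:L}^{(0)}$, then expresses $(\boldsymbol{Z}_{\text{train}}^{(1)}, \boldsymbol{\theta}_{2:L}^{(1)})$ as a deterministic function of $(\boldsymbol{Z}_{\text{train}}^{(0)}, \boldsymbol{\theta}_{2:L}^{(0)}, \boldsymbol{G}_P^{(0)}, \boldsymbol{Y}_{\text{train}})$, because both $\partial L/\partial \boldsymbol{Z}$ and $\partial L/\partial \boldsymbol{\theta}_{2:L}$ are computed from the forward pass of $g_{2:L}$, which does not see $\boldsymbol{X}_{\text{train}}$ directly. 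Invoking conditional DPI gives the base case.

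For the inductive step, I would assume the statement at step $t$ and reuse the same forward-pass observation for (\ref{eq:z_train_update}), namely that $\boldsymbol{Z}_{\text{train}}^{(t+1)}$ is a deterministic function of $(\boldsymbol{Z}_{\text{train}}^{(t)}, \boldsymbol{\theta}_{2:L}^{(t)}, \boldsymbol{Y}_{\text{train}}, \boldsymbol{G}_P^{(t)})$. Assumption~1.Q ensures that $\boldsymbol{Q}^{(t)}$, and hence $\boldsymbol{\theta}_{2:L}^{(t+1)}$, is a measurable function of $(\boldsymbol{Z}_{\text{train}}^{(t)}, \boldsymbol{\theta}_{2:L}^{(t)}, \{\boldsymbol{G}_P^{(s)}\}_{s=0}^{t-1}, \boldsymbol{Y}_{\text{train}})$. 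Augmenting the conditioning $\sigma$-algebra with the $\sigma(\boldsymbol{X}_{\text{train}})$-measurable variable $\boldsymbol{G}_P^{(t)}$ preserves the conditional independence supplied by the inductive hypothesis (since conditioning on a function of $\boldsymbol{X}_{\text{train}}$ cannot create dependence); applying DPI through the deterministic update map then delivers the claim at step $t+1$.

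The main obstacle I anticipate is pinning down the precise meaning of ``$\boldsymbol{P}^{(0)}$-isotropic'': the argument requires the conditional distribution of $\boldsymbol{Z}_{\text{train}}^{(0)}$ given $(\boldsymbol{X}_{\text{train}}, \boldsymbol{P}^{(0)})$ to be a measurable functional of $\boldsymbol{G}_P^{(0)}$ alone, which is transparent in the centered Gaussian case but for more general isotropic laws requires a rotational symmetry argument to strip away directions orthogonal to $\mathrm{range}(\boldsymbol{X}_{\text{train}})$. The remaining work—extending the conditioning set, propagating conditional independence through deterministic maps, and converting these into mutual-information statements via DPI—is routine measure-theoretic bookkeeping and closely parallels the original argument of \citet{wadia2021whitening}.
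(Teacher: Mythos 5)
Your proposal is correct and follows essentially the same route as the paper: a base case reducing the $\boldsymbol{P}^{(0)}$-isotropic initialization to the isotropic Gram-conditional independence (the paper's Proposition~\ref{prop:init_independent}), followed by induction that first augments the conditioning set with the $\sigma(\boldsymbol{X}_{\mathrm{train}})$-measurable $\boldsymbol{G}_P^{(t)}$ and then pushes conditional independence through the measurable update map (the paper's Lemmas~\ref{lem:2} and~\ref{lem:1}). The only point you leave implicit is that introducing $\boldsymbol{Y}_{\mathrm{train}}$ into the conditioning set requires the additional hypothesis $\boldsymbol{Y}_{\mathrm{train}}\perp(\boldsymbol{Z}^{(t)},\boldsymbol{\theta}_{2:L}^{(t)})\mid(\boldsymbol{X}_{\mathrm{train}},\boldsymbol{G}_P^{(0)})$, which the paper supplies from the data-generating assumption as a premise of its Lemma~\ref{lem:1}.
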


The proof is shown in Appendix~\ref{apx:proof_theo1}. Theorem~\ref{theo1} claims that, at any step $t \ge 1$, no information about $\boldsymbol{X}_{\text{train}}$ beyond what is contained in the Gram history
$\{\boldsymbol{G}_P^{(s)}\}_{s=0}^{t-1}$ and the labels $\boldsymbol{Y}_{\text{train}}$ is present in the training state
$(\boldsymbol{Z}_{\text{train}}^{(t)}, \boldsymbol{\theta}_{2:L}^{(t)})$. 

We next consider inference on a generic test sample $(\boldsymbol{x},\boldsymbol{y})\sim\mu$. For each step $t$, define the cross-Gram vector
$\boldsymbol{c}_P^{(t)} := \boldsymbol{X}_{\text{train}}^\top \boldsymbol{P}^{(t)} \boldsymbol{x}$.
With $\boldsymbol{z}^{(t)} := \boldsymbol{W}_1^{(t)\top} \boldsymbol{x}$, the first-layer hidden state on $\boldsymbol{x}$ evolves as
\vspace{-.8em}
\begin{equation}\label{eq:z_test_update}
  \boldsymbol{z}^{(t+1)}
  \;=\;
  \boldsymbol{z}^{(t)} \;-\; \eta\,\frac{\partial L^{(t)}}{\partial \boldsymbol{Z}_{\text{train}}^{(t)}}\, \boldsymbol{c}_P^{(t)} .
\end{equation}
\vspace{-1em}

Analogously to Theorem~\ref{theo1}, Theorem~2.2.1 of \citet{wadia2021whitening} admits the following extension.
\begin{theorem}[Extension of Theorem 2.2.1 of \citet{wadia2021whitening}]\label{theo2}
Assume that the initial preconditioner $\boldsymbol{P}^{(0)}$ is an arbitrary positive semi-definite matrix initialized independently of $\boldsymbol{W}_1^{(0)}$, and that the first layer is initialized in a $\boldsymbol{P}^{(0)}$-isotropic manner. Then, for all $t\ge 1$,in terms of the mutual information $I(\cdot;\cdot)$,
\begin{equation}\label{eq:test_independent}
I\!\big((\boldsymbol{z}^{(t)}, \boldsymbol{\theta}_{2:L}^{(t)});\, \boldsymbol{X}_{\text{train}}
 \,\big|\, \{\boldsymbol{G}_P^{(s)}\}_{s=0}^{t-1},\, \{\boldsymbol{c}_P^{(s)}\}_{s=0}^{t-1},\, \boldsymbol{Y}_{\text{train}}\big)
 \;=\; 0.
\end{equation}
\end{theorem}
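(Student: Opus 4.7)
The plan is to extend the induction underlying Theorem~\ref{theo1} by tracking the augmented state $\boldsymbol{S}^{(t)} := (\boldsymbol{Z}_{\mathrm{train}}^{(t)},\boldsymbol{z}^{(t)},\boldsymbol{\theta}_{2:L}^{(t)})$ against the augmented conditioning $\mathcal{H}_t := (\{\boldsymbol{G}_P^{(s)}\}_{s=0}^{t-1},\{\boldsymbol{c}_P^{(s)}\}_{s=0}^{t-1},\boldsymbol{Y}_{\mathrm{train}})$, and establishing by induction on $t \ge 1$ the stronger joint statement
\[
I\!\big(\boldsymbol{S}^{(t)};\,\boldsymbol{X}_{\mathrm{train}}\,\big|\,\mathcal{H}_t\big) \;=\; 0.
\]
The conclusion \eqref{eq:test_independent} then follows at once, since $(\boldsymbol{z}^{(t)},\boldsymbol{\theta}_{2:L}^{(t)})$ is a sub-vector of $\boldsymbol{S}^{(t)}$.

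For the base case $t=1$, I would adapt the Wadia-style reduction used in Theorem~\ref{theo1}. Decomposing $\boldsymbol{W}_1^{(0)} = (\boldsymbol{P}^{(0)})^{1/2}\boldsymbol{U}$ with $\boldsymbol{U}$ data-independent and rotationally invariant yields $[\boldsymbol{Z}_{\mathrm{train}}^{(0)}\ \boldsymbol{z}^{(0)}] = \boldsymbol{U}^\top (\boldsymbol{P}^{(0)})^{1/2}[\boldsymbol{X}_{\mathrm{train}}\ \boldsymbol{x}]$, so the joint law of the first-layer hidden states is a rotationally symmetric functional of the augmented data matrix whose $\boldsymbol{X}_{\mathrm{train}}$-dependence is captured by the Gram block $(\boldsymbol{G}_P^{(0)},\boldsymbol{c}_P^{(0)})$ already in $\mathcal{H}_1$. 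This is precisely the same reduction that proves Theorem~\ref{theo1}, now applied to $[\boldsymbol{X}_{\mathrm{train}}\ \boldsymbol{x}]$ instead of $\boldsymbol{X}_{\mathrm{train}}$. Combined with the data-independent draw of $\boldsymbol{\theta}_{2:L}^{(0)}$ and one application of the updates \eqref{eq:z_train_update}, \eqref{eq:z_test_update}, and the $\boldsymbol{\theta}$-update (where $\boldsymbol{Q}^{(0)}$ is measurable in $\mathcal{H}_1 \cup \boldsymbol{S}^{(0)}$ via Assumption~1.Q), the conditional data processing inequality delivers the base case.

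For the inductive step, Assumptions~1.P and 1.Q ensure that $\boldsymbol{S}^{(t+1)} = \phi(\boldsymbol{S}^{(t)},\mathcal{H}_{t+1})$ for some deterministic measurable $\phi$: the update of $\boldsymbol{Z}_{\mathrm{train}}^{(t+1)}$ depends only on $\boldsymbol{S}^{(t)}$, $\boldsymbol{G}_P^{(t)}$, and $\boldsymbol{Y}_{\mathrm{train}}$; the update of $\boldsymbol{z}^{(t+1)}$ similarly, with $\boldsymbol{c}_P^{(t)}$ in place of $\boldsymbol{G}_P^{(t)}$; and the $\boldsymbol{\theta}$-update uses $\boldsymbol{Q}^{(t)}$, measurable in $\mathcal{H}_{t+1} \cup \boldsymbol{S}^{(t)}$. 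Propagating the inductive hypothesis from $\mathcal{H}_t$ to $\mathcal{H}_{t+1}$ reuses the same rotational-symmetry argument as in the base case (now applied to the preconditioner $\boldsymbol{P}^{(t)} = \Phi_t(\boldsymbol{X}_{\mathrm{train}})$), and the conditional data processing inequality applied to $\phi$ closes the induction. The main obstacle is therefore the base case: one has to verify carefully that rotational invariance of $\boldsymbol{U}$ really does collapse all $\boldsymbol{X}_{\mathrm{train}}$-dependence of the joint first-layer law into the conditioned Gram statistics, even though $\boldsymbol{P}^{(0)}$ itself is a data-dependent function $\Phi_0(\boldsymbol{X}_{\mathrm{train}})$. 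The symmetric treatment of the test-point column is a direct extension of Wadia's original argument and requires no new ideas once that structural setup is in place; everything downstream of initialization, being deterministic, then follows mechanically.
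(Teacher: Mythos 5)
Your overall architecture---induction on the augmented state $(\boldsymbol{Z}_{\mathrm{train}}^{(t)},\boldsymbol{z}^{(t)},\boldsymbol{\theta}_{2:L}^{(t)})$, a Wadia-style reduction at initialization, and the data-processing inequality for the deterministic updates---matches the paper's proof, and your explicit inclusion of $\boldsymbol{Z}_{\mathrm{train}}^{(t)}$ in the tracked state is actually cleaner than the paper's write-up (the update of $\boldsymbol{z}^{(t+1)}$ needs $\partial L^{(t)}/\partial\boldsymbol{Z}_{\mathrm{train}}^{(t)}$, so the training hidden state must be carried along). However, there is a genuine gap in how you pass from the conditioning set $\mathcal{H}_t$ to $\mathcal{H}_{t+1}$. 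You attribute this step to ``the same rotational-symmetry argument as in the base case,'' but rotational symmetry of the Gaussian initialization is used exactly once, at $t=0$; at later steps $\boldsymbol{W}_1^{(t)}$ is no longer Gaussian and no such symmetry is available. What is actually needed---and what the paper isolates as Lemmas~\ref{lem:2} and~\ref{lem:add-exo}---is a pair of purely measure-theoretic conditional-independence facts: (i) if $A\perp X\mid C$ and $D=\phi(X,C)$, then $A\perp X\mid(C,D)$, which lets you append $\boldsymbol{G}_P^{(t)}=\Phi_t(\boldsymbol{X}_{\mathrm{train}})$ to the conditioning; and (ii) the same with $D=\phi(X,C,R)$ for exogenous $R\perp(A,X,C)$, which lets you append $\boldsymbol{c}_P^{(t)}=\boldsymbol{X}_{\mathrm{train}}^\top\boldsymbol{P}^{(t)}\boldsymbol{x}$ using the independence of the fresh test point $\boldsymbol{x}$. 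Item (ii) is the genuinely new ingredient of Theorem~\ref{theo2} relative to Theorem~\ref{theo1}, and your proposal never identifies it.

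A second, related problem sits in your base case. Applying the reduction to the augmented matrix $[\boldsymbol{X}_{\mathrm{train}}\ \boldsymbol{x}]$ collapses the dependence of $[\boldsymbol{Z}_{\mathrm{train}}^{(0)}\ \boldsymbol{z}^{(0)}]$ onto the full augmented Gram, which contains not only $\boldsymbol{G}_P^{(0)}$ and $\boldsymbol{c}_P^{(0)}$ but also the extra block $\boldsymbol{x}^\top\boldsymbol{P}^{(0)}\boldsymbol{x}$; this block is not in $\mathcal{H}_1$ and itself depends on $\boldsymbol{X}_{\mathrm{train}}$ through $\boldsymbol{P}^{(0)}=\Phi_0(\boldsymbol{X}_{\mathrm{train}})$. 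Conditional independence is not preserved under dropping a conditioning variable, so the statement your reduction yields ($A\perp\boldsymbol{X}_{\mathrm{train}}$ given the full augmented Gram) does not immediately give the one you need ($A\perp\boldsymbol{X}_{\mathrm{train}}$ given only $\boldsymbol{G}_P^{(0)},\boldsymbol{c}_P^{(0)}$). The paper avoids this by first establishing $(\boldsymbol{z}^{(0)},\boldsymbol{Z}_{\mathrm{train}}^{(0)},\boldsymbol{\theta}_{2:L}^{(0)})\perp\boldsymbol{X}_{\mathrm{train}}\mid\boldsymbol{G}_P^{(0)}$ (using that $\boldsymbol{z}^{(0)}$ is a function of $(\boldsymbol{W}_1^{(0)},\boldsymbol{x})$ alone) and only then adjoining $\boldsymbol{c}_P^{(0)}$ via the exogeneity lemma. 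With those two lemmas in hand, the rest of your plan goes through essentially as the paper's does.
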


The proof is shown in Appendix~\ref{apx:proof_theo2}. Theorem~\ref{theo2} shows that, at step $t \ge 1$, the test-time prediction $f^{(t)}(\boldsymbol{x})=g_{2:L}\!\big(\boldsymbol{z}^{(t)};\boldsymbol{\theta}_{2:L}^{(t)}\big)$ depends on the data only through $\{\boldsymbol{G}_P^{(s)}\}_{s=0}^{t-1}$, $\{\boldsymbol{c}_P^{(s)}\}_{s=0}^{t-1}$, and $\boldsymbol{Y}_{\text{train}}$.

\paragraph{Implication 1.} The preconditioner $\boldsymbol{P}^{(t)}$ establishes the geometry under which input samples are compared, inducing the inner product $\langle \boldsymbol{x}, \boldsymbol{x}'\rangle_{\boldsymbol{P}^{(t)}} := \boldsymbol{x}^\top \boldsymbol{P}^{(t)} \boldsymbol{x}'$ and the associated Mahalanobis norm $\|\boldsymbol{x}-\boldsymbol{x}'\|_{\boldsymbol{P}^{(t)}} := \sqrt{\,\langle \boldsymbol{x}-\boldsymbol{x}', \boldsymbol{x}-\boldsymbol{x}'\rangle_{\boldsymbol{P}^{(t)}}}$. 
As a result, the model’s learned representations and generalization performance are determined solely by the input similarity geometry defined by $\boldsymbol{P}^{(t)}$---with the components emphasized by $\boldsymbol{P}^{(t)}$ becoming dominant and the components it suppresses being downweighted---and by the training labels $\boldsymbol{Y}_{\text{train}}$.


\subsection{Preconditioning based on Covariance Eigendecomposition}
\label{subsec:precon_eigen}
We instantiate the preconditioner by $\boldsymbol{P}:=\boldsymbol{\Sigma}_X^{\,p}$ with $\boldsymbol{\Sigma}_X := \boldsymbol{X}_{\text{train}}\boldsymbol{X}_{\text{train}}^\top$ and
$p\in\mathbb{R}$. Since $\boldsymbol{P}$ is time-independent in this section, we simply write $\boldsymbol{P}$, $\boldsymbol{G}_P:=\boldsymbol{X}_{\text{train}}^\top \boldsymbol{P}\,\boldsymbol{X}_{\text{train}}$,
and $\boldsymbol{c}_P:=\boldsymbol{X}_{\text{train}}^\top \boldsymbol{P}\,\boldsymbol{x}$.

This instantiation is reasonable and is frequently employed in the analysis of preconditioning~\citep{wadia2021whitening,amari2020does,zhang2025on}.
For least-squares regression, the Hessian is proportional to the empirical covariance, while in logistic regression and generalized linear models (GLMs), the Hessian coincides exactly with a weighted covariance.
Moreover, for our model with a fully connected first layer, the Hessian with respect to each neuron in the first layer also takes the form of a weighted covariance matrix, as shown in Appendix~\ref{sec:hessian_weited_cov}.

Assume $d_x\le N$ and let the thin SVD of $\boldsymbol{X}_{\text{train}}$ be $\boldsymbol{X}_{\text{train}} \;=\; \boldsymbol{U}_{d_x} \boldsymbol{S}_{d_x} \boldsymbol{V}_{d_x}^\top$,
where $\boldsymbol{U}_{d_x}\in\mathbb{R}^{d_x\times d_x}$, $\boldsymbol{V}_{d_x}\in\mathbb{R}^{N\times d_x}$, $\boldsymbol{S}_{d_x}=\operatorname{diag}(s_1,\dots,s_{d_x})$,
and $r=\operatorname{rank}(\boldsymbol{X}_{\text{train}})$. Then
\[
  \boldsymbol{\Sigma}_X \;=\; \boldsymbol{X}_{\text{train}}\boldsymbol{X}_{\text{train}}^\top \;=\; \boldsymbol{U}_{d_x} \boldsymbol{S}_{d_x}^2 \boldsymbol{U}_{d_x}^\top,
  \qquad
  \boldsymbol{P} \;=\; \boldsymbol{\Sigma}_X^{\,p} \;=\; \boldsymbol{U}_{d_x} \boldsymbol{S}_{d_x}^{2p} \boldsymbol{U}_{d_x}^\top.
\]
Consequently, $\boldsymbol{G}_P$ admits the decomposition
\vspace{-1em}
\begin{equation}\label{eq:Gp}
  \boldsymbol{G}_P \;=\; \boldsymbol{X}_{\text{train}}^\top \boldsymbol{P}\,\boldsymbol{X}_{\text{train}}
       \;=\; \boldsymbol{V}_{d_x} \boldsymbol{S}_{d_x}^{2(p+1)} \boldsymbol{V}_{d_x}^\top
       \;=\; \sum_{r=1}^{d_x} s_r^{\,2(p+1)}\, \boldsymbol{v}_r \boldsymbol{v}_r^\top .
\end{equation}
\vspace{-1.5em}

For an input $\boldsymbol{x}\in\mathbb{R}^{d_x}$, express $\boldsymbol{x}$ in the left–singular basis with one extra $\boldsymbol{S}_{d_x}$ factor as
  $\boldsymbol{x} = \boldsymbol{U}_{d_x} \boldsymbol{S}_{d_x} \,\boldsymbol{\beta}$ where $\boldsymbol{\beta} \in \mathbb{R}^{d_x}.$
Then $\boldsymbol{c}_P$ becomes
\vspace{-1em}
\begin{equation}\label{eq:cp}
  \boldsymbol{c}_P \;=\; \boldsymbol{X}_{\text{train}}^\top \boldsymbol{P}\,\boldsymbol{x}
            \;=\; \boldsymbol{V}_{d_x} \boldsymbol{S}_{d_x}^{\,2(p+1)} \boldsymbol{\beta}
            \;=\; \sum_{r=1}^{d_x} s_r^{\,2(p+1)}\, \beta_r\, \boldsymbol{v}_r .
\end{equation}
\vspace{-1em}

\textbf{Implication 2.}
Combining Eqs.~\ref{eq:Gp} and~\ref{eq:cp} with Eqs.~\ref{eq:train_independent} and~\ref{eq:test_independent}, we observe that feature learning and generalization are determined solely by Gram matrix induced by the $p$–dependent similarity geometry together with the training labels $\boldsymbol{Y}_{\text{train}}$. Within this geometry, increasing $p$ amplifies the contribution of high variance directions, whereas decreasing $p$ shifts emphasis toward low variance components of input data.

\subsection{Preconditioning and Label Alignment for Generalization}
\label{subsec:label_alignment}
Furthermore, within the single-index teacher framework, we make explicit the relationship between the labels and the input spectrum.
\vspace{-1em}
\begin{equation}\label{eq:teacher}
    f^\star(\boldsymbol{x}) \;=\; h^\star\!\left(\sum_{r=1}^d \frac{\alpha_r}{s_r}\, \boldsymbol{u}_r^\top \boldsymbol{x}\right) + \boldsymbol{\epsilon} = h^\star\!\left(\sum_{r=1}^d \alpha_r\, \beta_r\right) + \boldsymbol{\epsilon} ,
\end{equation}
\vspace{-.1em}
where $h^\star:\mathbb{R}\!\to\!\mathbb{R}^{d_y}$ and $\boldsymbol{\epsilon}$ is zero-mean noise independent of $\boldsymbol{x}$.
The coefficient vector $\boldsymbol{\alpha} \in \mathbb{R}^r$ has entries $\alpha_r$, each specifying how strongly the label depends on the corresponding principal direction.

\paragraph{Implication 3.}
The labels are parameterized by $\boldsymbol{\alpha}^\top\boldsymbol{\beta}$ and by the noise term $\boldsymbol{\epsilon}$. Building upon the discussion in Secs.~\ref{subsec:newron_wise_precon} and~\ref{subsec:precon_eigen}, we conclude that the model’s feature learning and generalization are determined by the Gram matrix induced by the $p$–dependent similarity geometry and the teacher’s alignment with the input spectrum as captured by $\boldsymbol{\alpha}^\top\boldsymbol{\beta}$ (and the label noise $\boldsymbol{\epsilon}$). 
\textbf{Holding the input spectrum fixed, learning is governed by how $p$ drives the learner to emphasize high variance or low variance components, which components the teacher relies on to produce labels, and the level of label noise.}


Motivated by this implication, the subsequent sections examine how the relationship between $p$ and the teacher’s alignment with the input spectrum shapes generalization performance from several perspectives.


\section{Multifaceted Analysis and Empirical Validation}\label{sec:exps}

\subsection{On Robustness to Noise}
\label{subsec:robust_to_noise}
\begin{figure}[t!]
    \begin{minipage}[t]{\textwidth}
        \centering
        \begin{subfigure}{0.24\textwidth}
            \includegraphics[width=\linewidth]{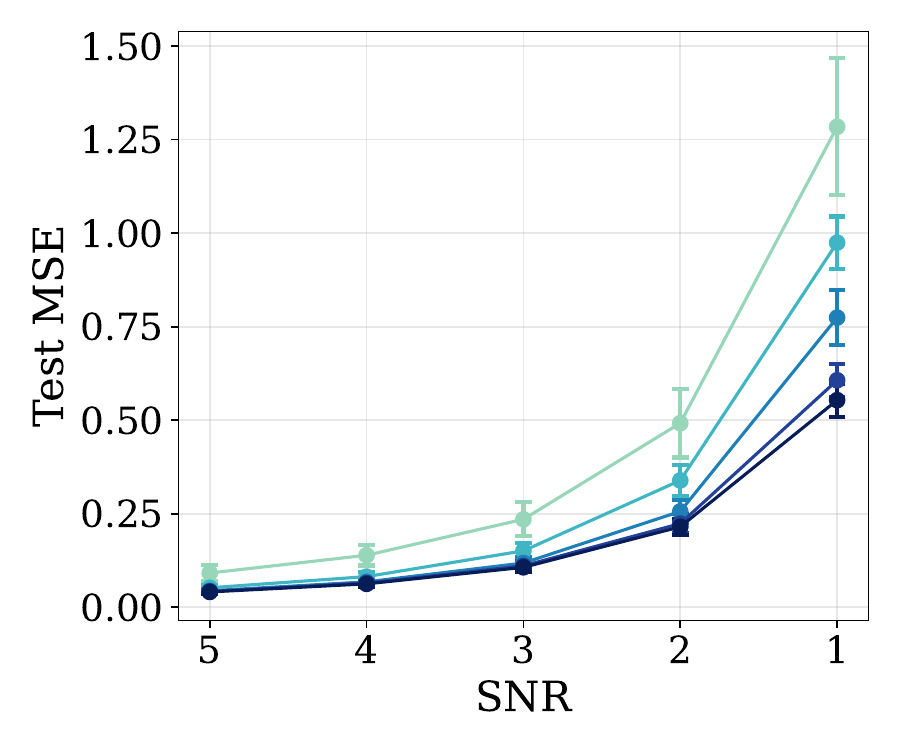}
            \label{fig:exact_cov_test_mse_to_snr_case1}
        \end{subfigure}
        \begin{subfigure}{0.24\textwidth}
            \includegraphics[width=\linewidth]{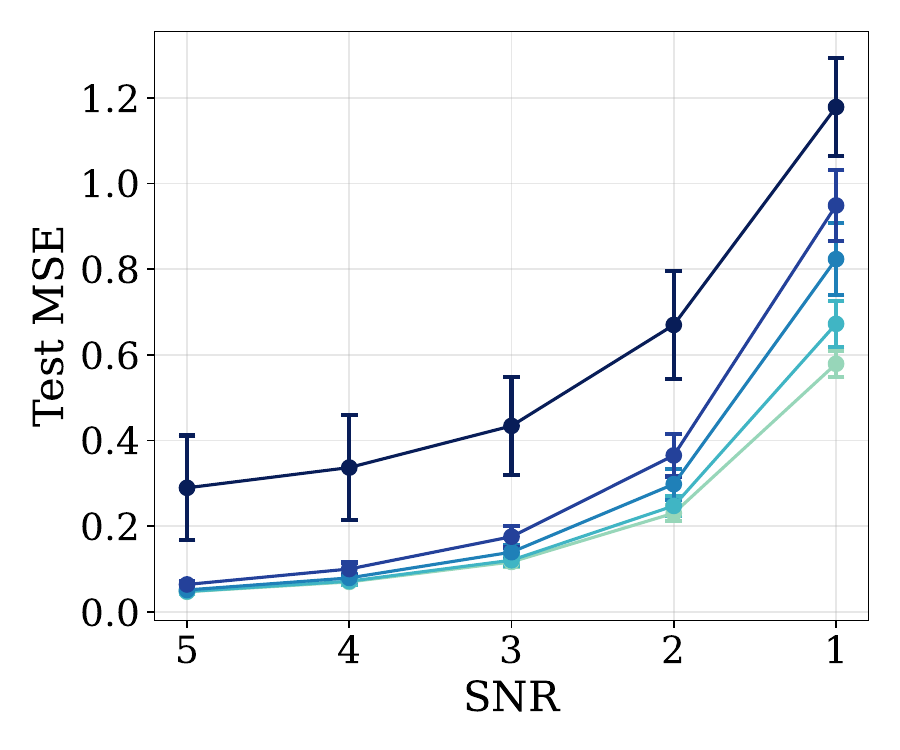}
            \label{fig:exact_cov_test_mse_to_snr_case2}
        \end{subfigure} 
        \begin{subfigure}{0.24\textwidth}
            \includegraphics[width=\linewidth]{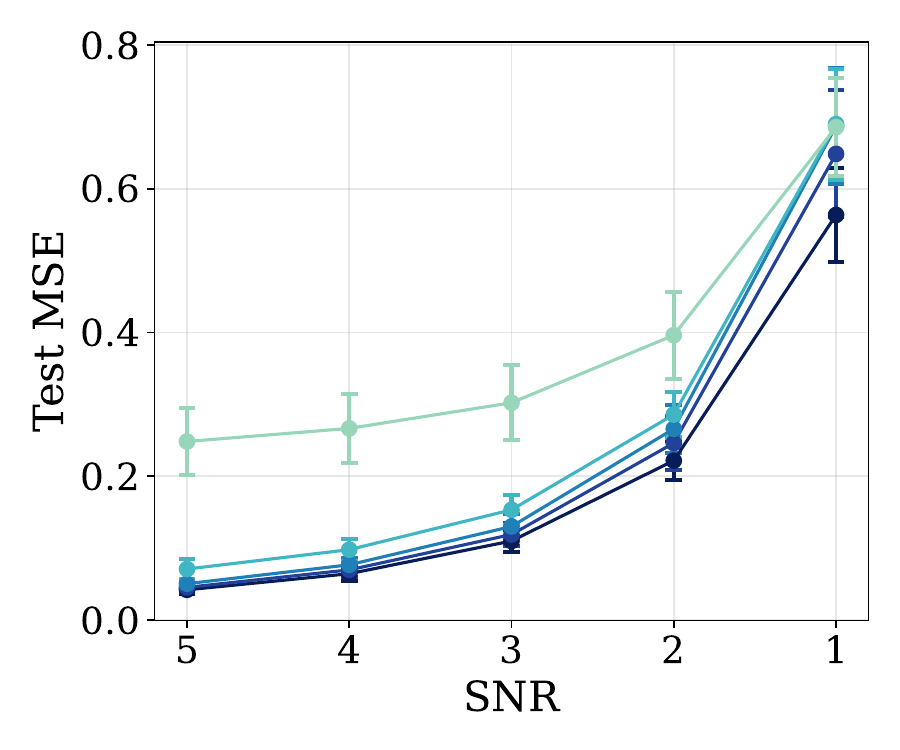}
            \label{fig:SNR_1_case1}
        \end{subfigure}
        \begin{subfigure}{0.24\textwidth}
            \includegraphics[width=\linewidth]{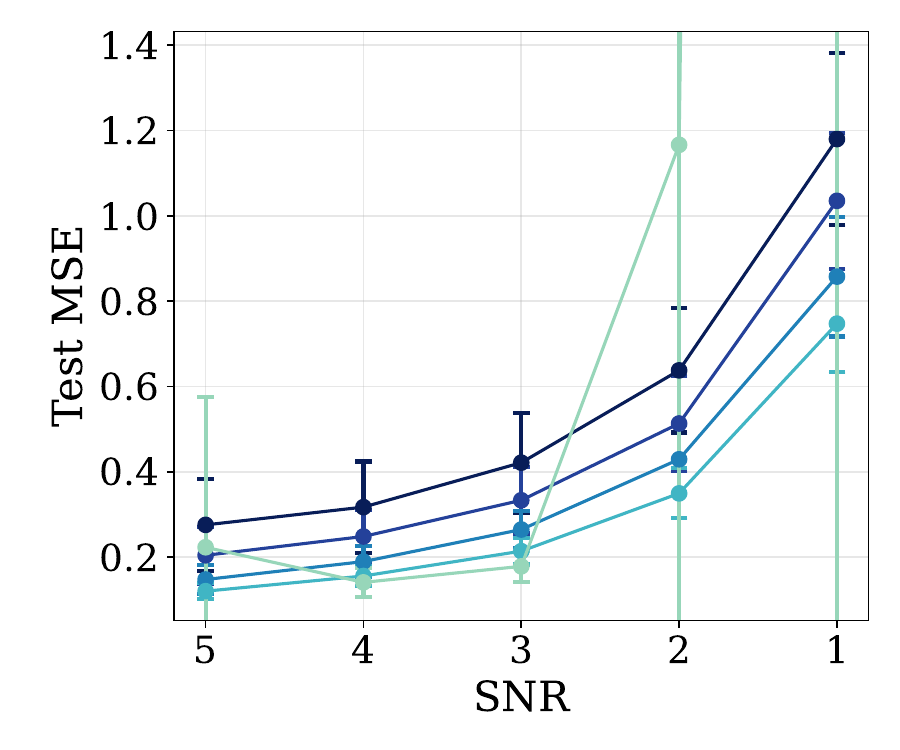}
            \vspace{.1em}
        \end{subfigure}
    \end{minipage}
    \vspace{-3em}
    \begin{center}
    \scriptsize
    \setlength{\tabcolsep}{3pt}
    \begin{tabular}{@{}llllllllll@{}}
    \raisebox{0pt}{\tikz\fill[tp0]   (0,0) circle (1.5pt);} & $p=0$ &
    \raisebox{0pt}{\tikz\fill[tp05] (0,0) circle (1.5pt);} & $p=-0.5$ &
    \raisebox{0pt}{\tikz\fill[tp1]  (0,0) circle (1.5pt);} & $p=-1$ &
    \raisebox{0pt}{\tikz\fill[tp15] (0,0) circle (1.5pt);} & $p=-1.5$ &
    \raisebox{0pt}{\tikz\fill[tp2]  (0,0) circle (1.5pt);} & $p=-2$
    \end{tabular}
    \end{center}
    \begin{minipage}[t]{\textwidth}
        \centering
        \begin{subfigure}[t]{0.24\textwidth}
            \vspace{-11.2em}
            \includegraphics[width=\linewidth]{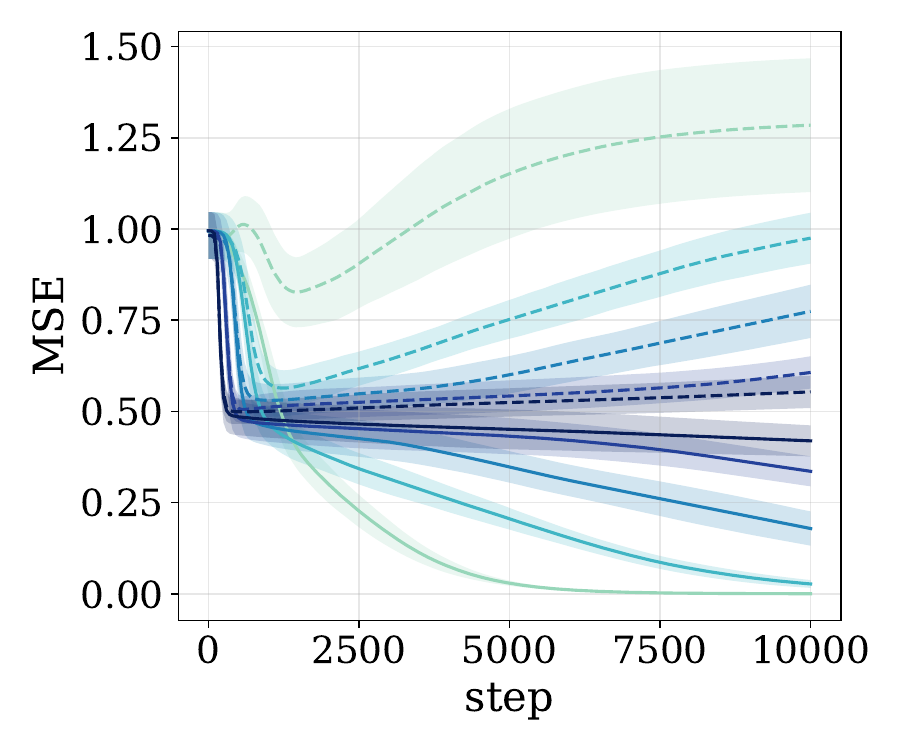}
            \vspace{.5em}
            \caption{Case~\setting{High}}
            \label{fig:adahessian_mse_to_snr_case1}
        \end{subfigure}
        \begin{subfigure}{0.24\textwidth}
            \includegraphics[width=\linewidth]{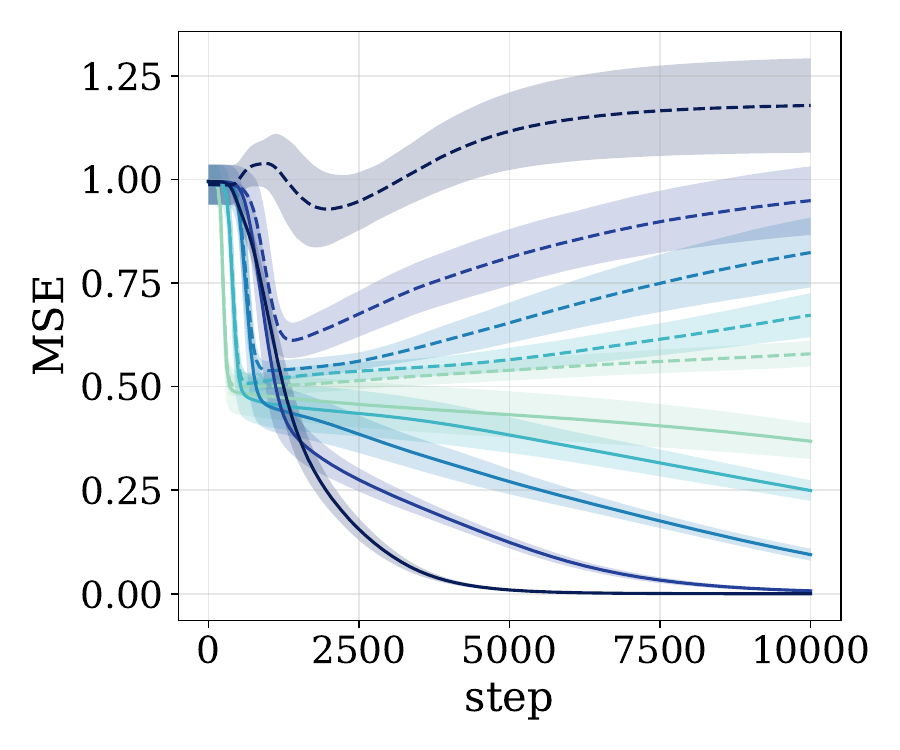}
            \vspace{.5em}
            \caption{Case~\setting{Low}}
            \label{fig:adahessian_mse_to_snr_case2}
        \end{subfigure}
        \begin{subfigure}{0.24\textwidth}
            \includegraphics[width=\linewidth]{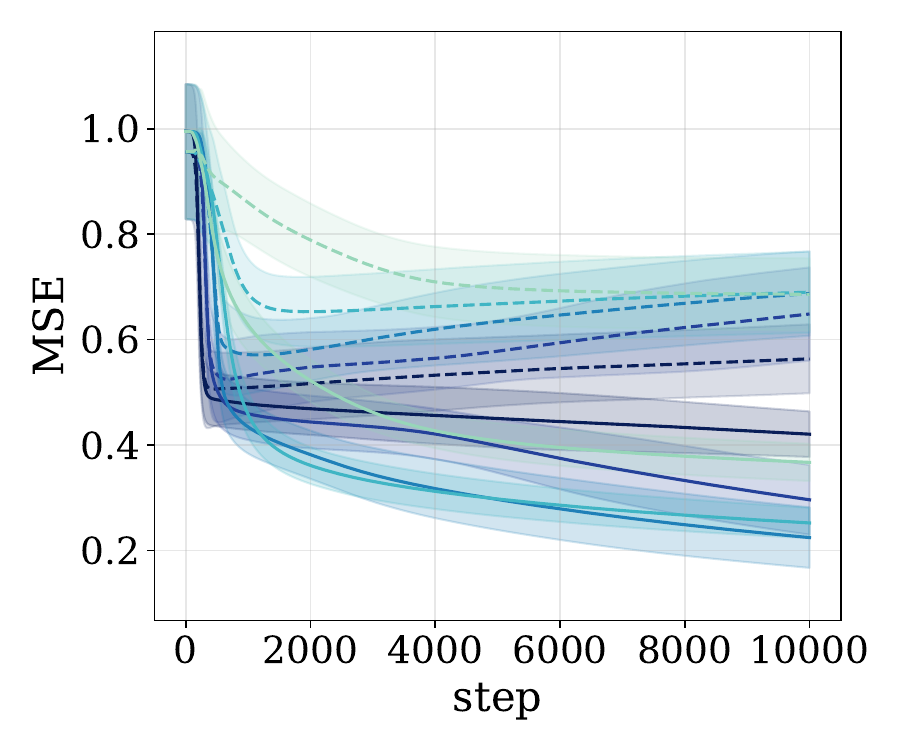}
            \vspace{.5em}
            \caption{Case~\setting{High}}
            \label{fig:adahessian_snr_1_case1}
        \end{subfigure}
        \begin{subfigure}{0.24\textwidth}
            \includegraphics[width=\linewidth]{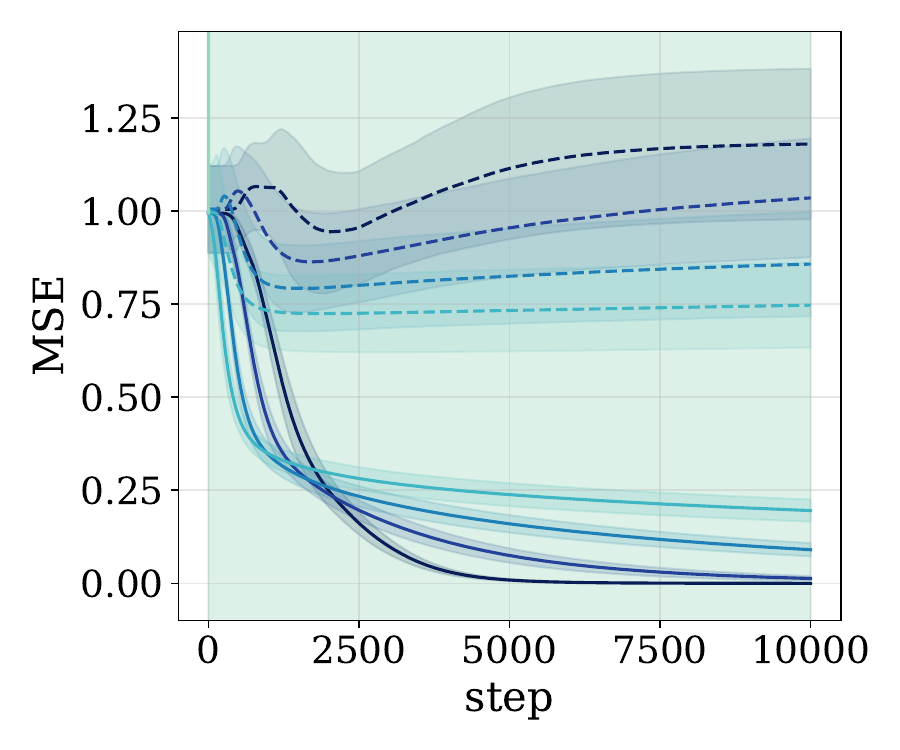}
            \vspace{.5em}
            \caption{Case~\setting{Low}}
            \label{fig:adahessian_snr_1_case2}
        \end{subfigure}
    \end{minipage}
    \vspace{-4.7em}
    \begin{center}
    \scriptsize
    \setlength{\tabcolsep}{3pt}
    \begin{tabular}{@{}llllllllll@{}}
    \raisebox{.4pt}{\tikz{\draw[tp0, line width=1pt, solid]   (0,0) -- (0.4,0);}}   & $p=0$ (Train) &
    \raisebox{.4pt}{\tikz{\draw[tp05, line width=1pt, solid]  (0,0) -- (0.4,0);}}  & $p=-0.5$ (Train) &
    \raisebox{.4pt}{\tikz{\draw[tp1, line width=1pt, solid]   (0,0) -- (0.4,0);}}     & $p=-1$ (Train) &
    \raisebox{.4pt}{\tikz{\draw[tp15, line width=1pt, solid]  (0,0) -- (0.4,0);}}  & $p=-1.5$ (Train) &
    \raisebox{.4pt}{\tikz{\draw[tp2, line width=1pt, solid]   (0,0) -- (0.4,0);}}      & $p=-2$ (Train) \\
    \raisebox{.4pt}{\tikz{\draw[tp0, line width=1pt, dashed]   (0,0) -- (0.4,0);}}   & $p=0$ (Test) &
    \raisebox{.4pt}{\tikz{\draw[tp05, line width=1pt, dashed]  (0,0) -- (0.4,0);}}  & $p=-0.5$ (Test) &
    \raisebox{.4pt}{\tikz{\draw[tp1, line width=1pt, dashed]   (0,0) -- (0.4,0);}}     & $p=-1$ (Test) &
    \raisebox{.4pt}{\tikz{\draw[tp15, line width=1pt, dashed]  (0,0) -- (0.4,0);}}  & $p=-1.5$ (Test) &
    \raisebox{.4pt}{\tikz{\draw[tp2, line width=1pt, dashed]   (0,0) -- (0.4,0);}}      & $p=-2$ (Test)
    \end{tabular}
    \end{center}
    \vspace{1.em}
    \caption{\textbf{The relationship between robustness to noise and preconditioning.} (a) and (b) show the results when preconditioning is performed using the exact covariance matrix, for Case~\setting{High} and \setting{Low}, respectively. The top row presents the final test MSE for each SNR value, while the bottom row shows the train/test MSE trajectories for SNR=1. In Case~\setting{High}, larger $p$ values more effectively prevent overfitting, whereas in Case~\setting{Low}, the opposite trend is observed. (c) and (d) display the results for preconditioning with AdaHessian. Except for the numerically unstable case of $p = -2$, the same trends as in (a) and (b) are observed here as well.}
    \vspace{-1.em}
    \label{fig:robustness_to_noise}
\end{figure}

\begin{wraptable}{r}{0.4\textwidth}
    \vspace{-2.2em}
    \centering
    \small 
    \setlength{\tabcolsep}{4pt} 
    \renewcommand{\arraystretch}{1.1}
    \caption{Teacher alignment settings}
    \vspace{-.5em}
    \label{tab:case_table}
    \begin{tabular}{l l l}
        \toprule
        {Case} & $\boldsymbol{S}_{d_x}^2$ & $\boldsymbol{\alpha}$ \\
        \midrule
        \setting{High} & $\operatorname{diag}(\lambda, \lambda^{-1}, \dots, \lambda^{-1})$ & $\boldsymbol{e}_1$ \\
        \setting{Low} & $\operatorname{diag}(\lambda, \dots, \lambda, \lambda^{-1})$ & $\boldsymbol{e}_d$ \\
        \bottomrule
    \end{tabular}
    \vspace{-.5em}
    \begin{minipage}{0.9\linewidth}
        \scriptsize
        \textit{Note:} $\boldsymbol{e}_i$ denotes the standard basis vector whose $i$-th entry is $1$ and all other entries are $0$.
    \end{minipage}
    \vspace{-2em}
\end{wraptable}

We first empirically examine the relationship between preconditioning and vanilla generalization performance.

We construct synthetic datasets under the teacher-student setup in Eq.~\ref{eq:teacher}, where the teacher activation function is given by $h^\star(\boldsymbol{z}) = \log(1 + \exp(10\boldsymbol{z}))$.
We specify the eigenvalue matrix $\boldsymbol{S}_{d_x}^2$ and the teacher coefficient vector $\boldsymbol{\alpha}$ as in Table~\ref{tab:case_table}. In Case~\setting{High}, only the unique largest eigenvalue corresponds to the informative signal, whereas in Case~\setting{Low}, only the unique smallest eigenvalue carries the signal.
In both cases, we set the eigenvector matrix \(\boldsymbol{U}\) to be the $d_x$-dimensional identity matrix $\boldsymbol{I}_{d_x}$, fix the dimensionality as $d_x = 10$, $d_y = 1$, and set \(\lambda = 10\).
Gaussian noise is added to the labels, with $\epsilon \sim \mathcal{N}(0,\sigma^2)$, where the noise standard deviation $\sigma$ is chosen to achieve a target signal-to-noise ratio (SNR).
We vary SNR across \(\{5, 4, 3, 2, 1\}\).

The student model is a two-layer MLP with ReLU activation and hidden dimension $d_h =256$. 
We use only 200 training samples so that the effect of label noise becomes more pronounced. 
The model is trained with full-batch training for 10{,}000 steps, using a learning rate of $1 \times 10^{-2}$ and weight decay of $1 \times 10^{-6}$. 
We report the mean and standard deviation over 10 different random seeds.

\subsubsection{Preconditioning with the Covariance Matrix}
We first consider the case where the exact covariance matrix $\boldsymbol{\Sigma}_X$ is applied only to the gradients of the first-layer neurons. For the second layer, we use GD without any preconditioning.
The left two columns of Fig.~\ref{fig:robustness_to_noise} summarize the results for both cases. The line color represents the preconditioning exponent $p$, ranging from blue ($p=0$) to green ($p=-2$).

In the top row, we plot the relationship between SNR and the final test MSE. Across all SNR levels, we consistently observe that in Case~\setting{High} larger $p$ yields lower test MSE, whereas in Case~\setting{Low} smaller $p$ yields lower test MSE. This effect becomes more pronounced as label noise dominates.

In the bottom row, we plot the train/test MSE trajectories when SNR$=1$. The trend with respect to $p$ clearly reverses between the two cases: in Case~\setting{High}, larger $p$ is more robust to overfitting, while in Case~\setting{Low}, smaller $p$ is more robust. 

\subsubsection{Preconditioning with the Hessian}
\label{subsubsec:adahessian_p}
Next, we examine whether preconditioning with the Hessian exhibits behavior consistent with the exact covariance-based preconditioning discussed in the previous subsection. 
Here, we apply preconditioning in a neuron-wise manner by using a diagonal approximation of the full parameter Hessian. 
Because the eigenvector matrix $\boldsymbol{U}$ is set to the identity, the covariance matrix becomes diagonal. Therefore, if the Hessian is regarded as its approximation, we expect that the diagonal approximation of the Hessian and the neuron-wise block-diagonal approximation in the first layer will exhibit equivalent behavior.
In our experiments, we adopt AdaHessian~\citep{yao2021adahessian} as an optimizer that employs the diagonal approximation of the Hessian, and we investigate how varying the power $p$ applied to its diagonal entries affects the learning dynamics.

The results are shown in the right two columns of Fig.~\ref{fig:robustness_to_noise}. 
Mirroring the left two columns, the top row plots SNR against the final test MSE, and the bottom row shows the train/test MSE trajectories at SNR$=1$ for each value of $p$. 
Except for $p=-2$ in Case~\setting{Low}, where training became numerically unstable and diverged, we observe the same qualitative trends as with exact covariance preconditioning: in Case~\setting{High}, larger $p$ consistently yields lower test MSE, whereas in Case~\setting{Low}, smaller $p$ yields lower test MSE, with the effect strengthening as SNR decreases.

\textbf{Overall, in terms of robustness to noise, our results indicate that feature learning becomes more sensitive to the components emphasized by $p$, and that performance improves when $p$ is chosen to align with the teacher’s spectral emphasis $\boldsymbol{\alpha}$.}

\paragraph{Relation to prior work.}
\citet{wadia2021whitening} argue that whitening or its second–order proxies can harm generalization in their
setting by reducing the information carried in the Gram. Our analysis suggests a more spectrum-dependent picture:
the \emph{lost} information is primarily variance–strength, which is beneficial when the teacher aligns with
high–variance directions but can be \emph{harmful} when the signal lives in low–variance directions—precisely
the regime where flattening (e.g., $p=-1$) removes a misleading bias and can improve generalization. Furthermore, our results are consistent with those of \citet{amari2020does}. They decomposed the test loss into bias and variance components, and demonstrated in overparameterized linear models that preconditioning aligned with the teacher is optimal for the bias term, which is the only component affected by the teacher's spectral bias. Our findings provide strong evidence that their argument can be extended to general neural networks.


\subsection{On Out-of-Distribution Generalization}
\begin{figure}[t!]
    \centering
    \begin{subfigure}{0.45\linewidth}
        \includegraphics[width=\linewidth]{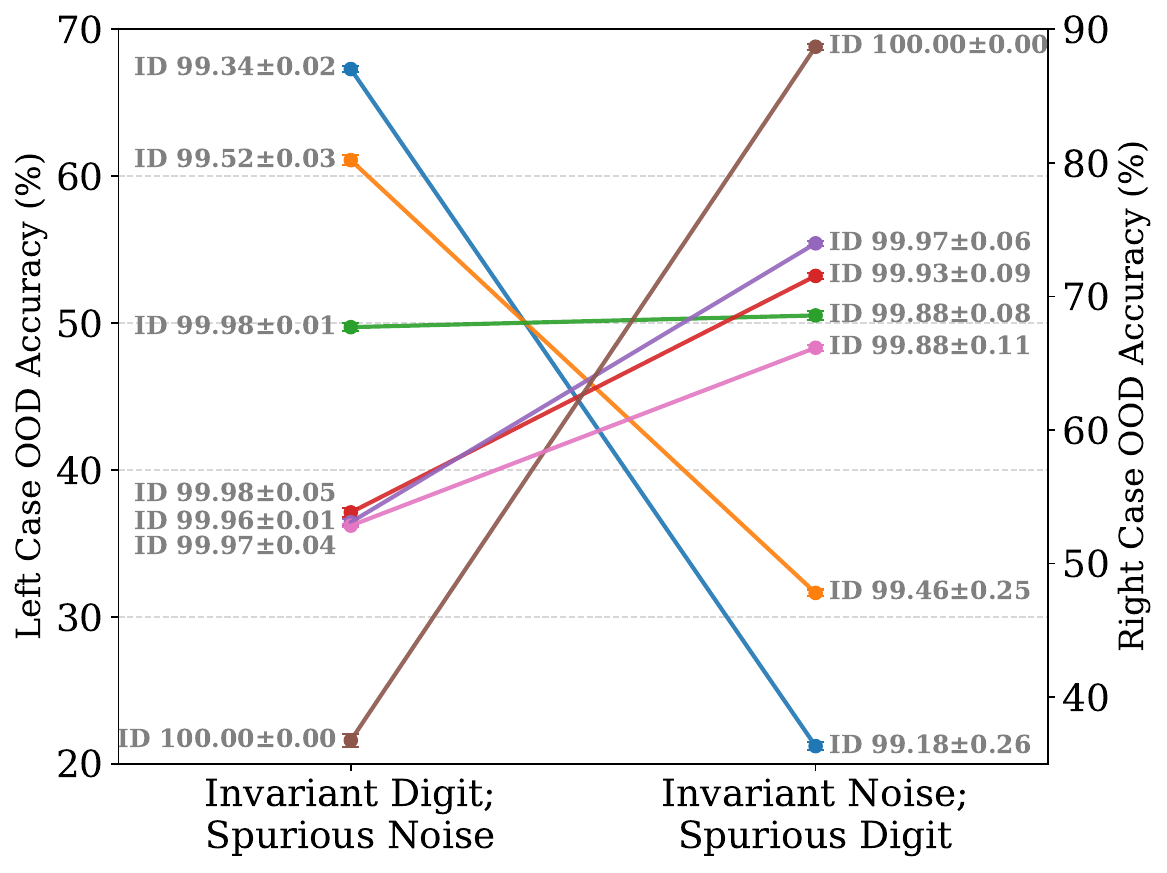}
        \begin{center}
        \scriptsize
        \setlength{\tabcolsep}{3pt}
        \begin{tabular}{@{}ll@{\hspace{.8em}}ll@{\hspace{.8em}}ll@{\hspace{.8em}}ll@{}}
        \raisebox{.4pt}{\color{skyblue}\rule{6pt}{1.pt}}   & SAM &
        \raisebox{.4pt}{\color{orange}\rule{6pt}{1.pt}}  & GD &
        \raisebox{.4pt}{\color{green}\rule{6pt}{1.pt}}     & Adam &
        \raisebox{.4pt}{\color{red}\rule{6pt}{1.pt}}  & Sophia-H \\
        \raisebox{.4pt}{\color{purple}\rule{6pt}{1.pt}}      & AdaHessian &
        \raisebox{.4pt}{\color{pink}\rule{6pt}{1.pt}}  & K-FAC &
        \raisebox{.4pt}{\color{brown}\rule{6pt}{1.pt}}      & L-BFGS &
        
        \end{tabular}
        \end{center}
        \caption{Optimizers}
        \label{fig:mnist_ood_optimizer}
    \end{subfigure}
    \hspace{5mm}
    \begin{subfigure}{0.45\linewidth}
        \includegraphics[width=\linewidth]{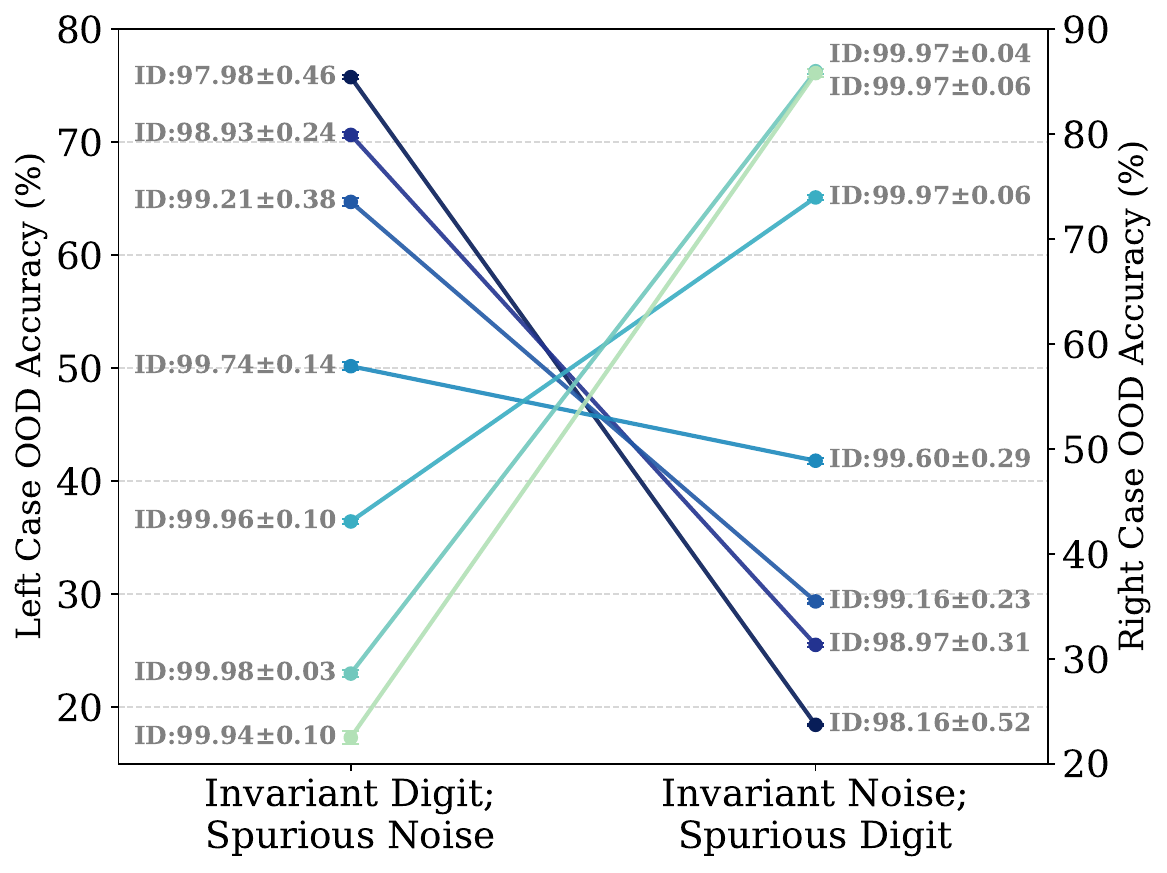}
        \begin{center}
        \scriptsize
        \setlength{\tabcolsep}{3pt}
        \begin{tabular}{@{}ll@{\hspace{.8em}}ll@{\hspace{.8em}}ll@{\hspace{.8em}}ll@{}}
        \raisebox{.4pt}{\color{pPosOne}\rule{6pt}{1.pt}}   & $p=1$ &
        \raisebox{.4pt}{\color{pPosHalf}\rule{6pt}{1.pt}}  & $p=0.5$ &
        \raisebox{.4pt}{\color{pZero}\rule{6pt}{1.pt}}     & $p=0$ &
        \raisebox{.4pt}{\color{pNegHalf}\rule{6pt}{1.pt}}  & $p=-0.5$ \\
        \raisebox{.4pt}{\color{pNegOne}\rule{6pt}{1.pt}}      & $p=-1$ &
        \raisebox{.4pt}{\color{pNegOneHalf}\rule{6pt}{1.pt}}  & $p=-1.5$ &
        \raisebox{.4pt}{\color{pNegTwo}\rule{6pt}{1.pt}}      & $p=-2$ &
        
        \end{tabular}
        \end{center}
        \caption{Different p in AdaHessian}
        \label{fig:mnist_ood_p}
    \end{subfigure}
    \caption{\textbf{The relationship between OOD generalization and  preconditioning.} \textbf{(a)} Comparison across optimizers (SAM, GD, Adam, Sophia-H, AdaHessian, K-FAC, L-BFGS). \textbf{(b)} AdaHessian with different powers applied to the diagonal Hessian entries. In each panel, the left and right columns correspond to two settings (left: invariant digit with spurious noise; right: invariant noise with a spurious digit). Although ID Val accuracy (gray numbers) is near ceiling for all methods, OOD accuracy varies substantially, and the optimizer ranking reverses between the two settings. Sweeping the power in (b) reproduces the same reversal, indicating that preconditioning steers learning toward different covariance eigen-directions; OOD performance improves when this implicit emphasis aligns with invariant features.}
    \vspace{-1.em}
\end{figure}

The trends observed for the choice of $p$ and generalization performance should not only be interpreted in terms of in-distribution (ID) generalization, but also be discussed in the context of OOD generalization. 
In particular, under correlation shift, the training inputs contain both \emph{invariant features} and \emph{spurious features}, where the former remain predictive under the test distribution while the latter become invalid or even harmful for generalization~\citep{arjovsky2019invariant,sagawa2020investigation}. 
In this setting, unlike in Sec.~\ref{subsec:robust_to_noise}, where the key factor was the alignment of the teacher signal with certain eigen-components of $\boldsymbol{\Sigma}_X$, here the placement of spurious features along the eigen-directions also plays a decisive role in determining generalization.
For example, if spurious features are aligned with low-eigenvalue components in Case~\setting{High}, then smaller values of $p$ make the learner more vulnerable to being misled. 
Conversely, if spurious features align with high-eigenvalue components in Case~\setting{Low}, larger values of $p$ are more detrimental.

We conduct experiments on MNIST with spurious noise added to the data. 
Specifically, for each of the 10 digit classes, we add class-specific Gaussian noise with a small standard deviation. 
We here regard the digit information as aligned with high-variance components and the added noise as aligned with low-variance components. 
Although the noise is isotropic and is added equally in all directions, its relative effect becomes more significant along directions with smaller eigenvalues. 
At test time, we manipulate the features by either flipping the noise or flipping the digit relative to the label, thereby controlling which feature is invariant and which is spurious. 
The model architecture is the same as in Sec.~\ref{subsec:robust_to_noise}.

\subsubsection{Comparison across optimizers}
We begin by comparing a range of optimizers, including SAM, GD, Adam, AdaHessian, Sophia-H, K-FAC, and L-BFGS. 
In our framework, SAM is treated as corresponding to $p=1$, GD to $p=0$, and the remaining second-order methods as approximations of $p=-1$.
Hyperparameters are selected by grid search based on accuracy on ID validation (Val) accuracy.

Fig.~\ref{fig:mnist_ood_optimizer} reports the results. 
We plot the case where the noise is flipped as the left column of points and the case where the digit is flipped as the right column. 
Each color denotes an optimizer; the left (right) y-axis corresponds to the OOD accuracy of the left (right) column. 
For reference, we report the ID Val accuracy next to each point in gray.
Despite near-ceiling ID Val accuracy for all methods, OOD accuracy differs substantially across optimizers and, crucially, the {ranking flips} between the two settings. 
When the noise is spurious (left column), optimizers that emphasize higher-variance directions perform better (SAM $>$ GD $>$ second-order optimizers). 
Conversely, when the digit is spurious (right column), second-order optimizers dominate (second-order optimizers $>$ GD $>$ SAM). Notably, within the family of second-order methods, substantial variation remains: their relative ranking also tends to flip. 
Some, such as Adam, exhibit behavior more closely aligned with GD and SAM, whereas others, such as K-FAC, diverge more significantly.

\subsubsection{Comparison across Powers of the Hessian}
Following Sec.~\ref{subsubsec:adahessian_p}, we vary the exponent $p$ applied to the AdaHessian preconditioner and independently tune hyperparameters for each $p$.

The results in Fig.~\ref{fig:mnist_ood_p} mirror the optimizer-level comparison in Fig.~\ref{fig:mnist_ood_optimizer}: the {OOD ranking flips} between the two settings, while ID validation remains near ceiling for all $p$. 
Thus, increasing or decreasing $p$ systematically changes which spectral components the learner relies on, reinforcing the interpretation that Hessian-based preconditioning acts as a controllable spectral bias whose optimal value depends on the alignment between invariant/spurious features and the eigenstructure of $\boldsymbol{\Sigma}_X$.

\textbf{As in ID generalization, we find that OOD performance hinges on the preconditioner emphasizing the same spectral directions as the teacher’s signal. More concretely, a preconditioner that emphasizes invariant feature components while suppressing spurious components improves OOD generalization.}

\begin{figure}[!t]
    \centering
    \begin{subfigure}{0.24\linewidth}
        \includegraphics[width=\linewidth]{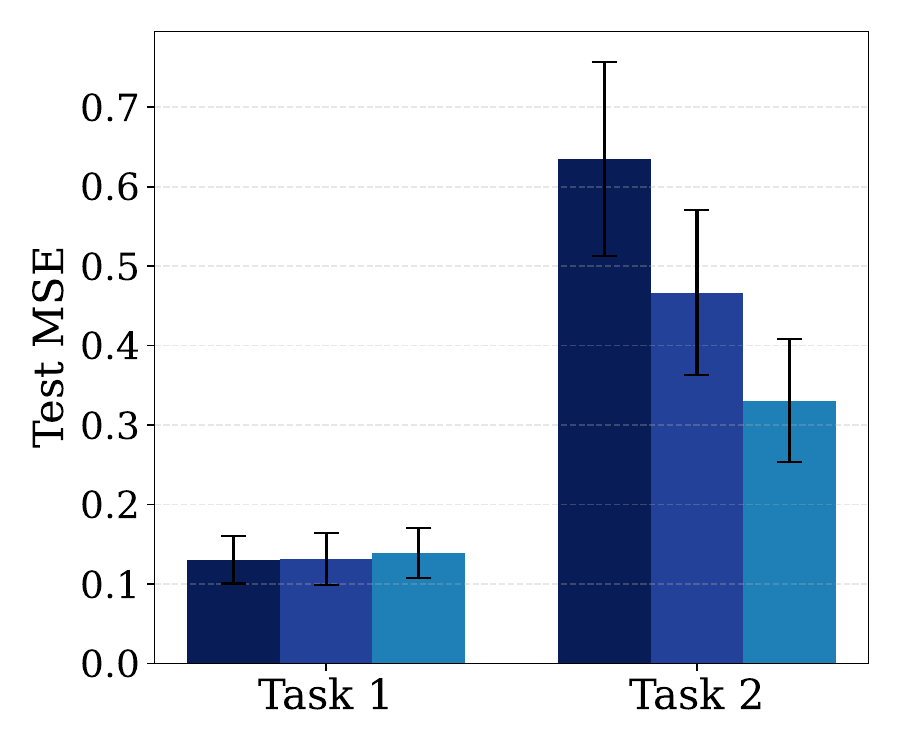}
        \vspace{.3em}
        \caption{\setting{High}$\to$\setting{Low}}
        \label{fig:transfer_cov_bar_case1}
    \end{subfigure}
    \begin{subfigure}{0.24\linewidth}
        \includegraphics[width=\linewidth]{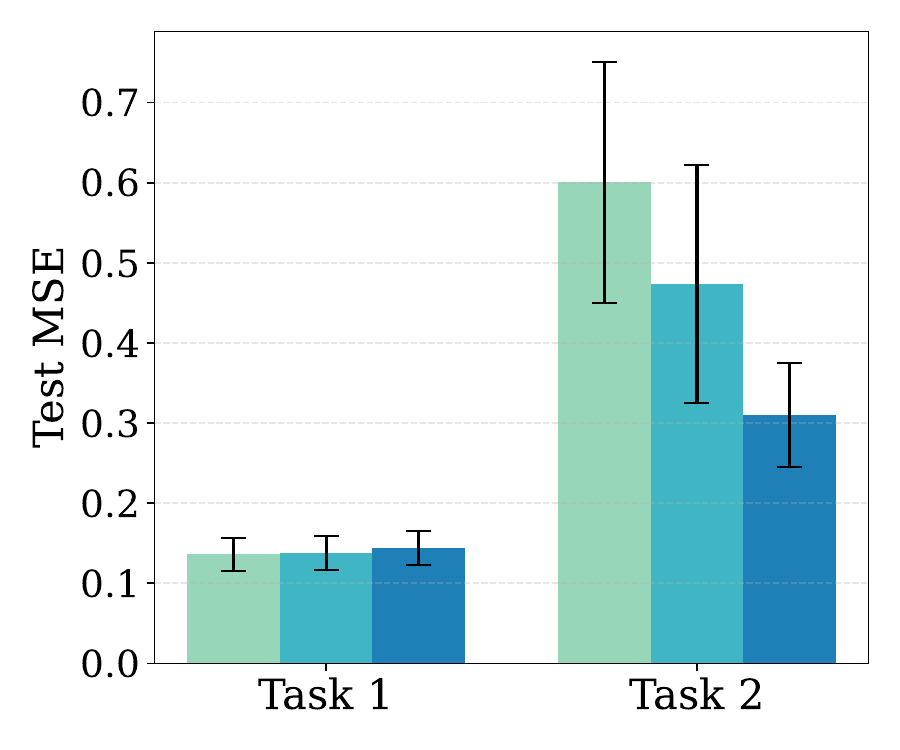}
        \vspace{.3em}
        \caption{\setting{Low}$\to$\setting{High}}
        \label{fig:transfer_cov_bar_case2}
    \end{subfigure}        
    \begin{subfigure}[t]{0.24\linewidth}
        \vspace{-11.em}
        \includegraphics[width=\linewidth]{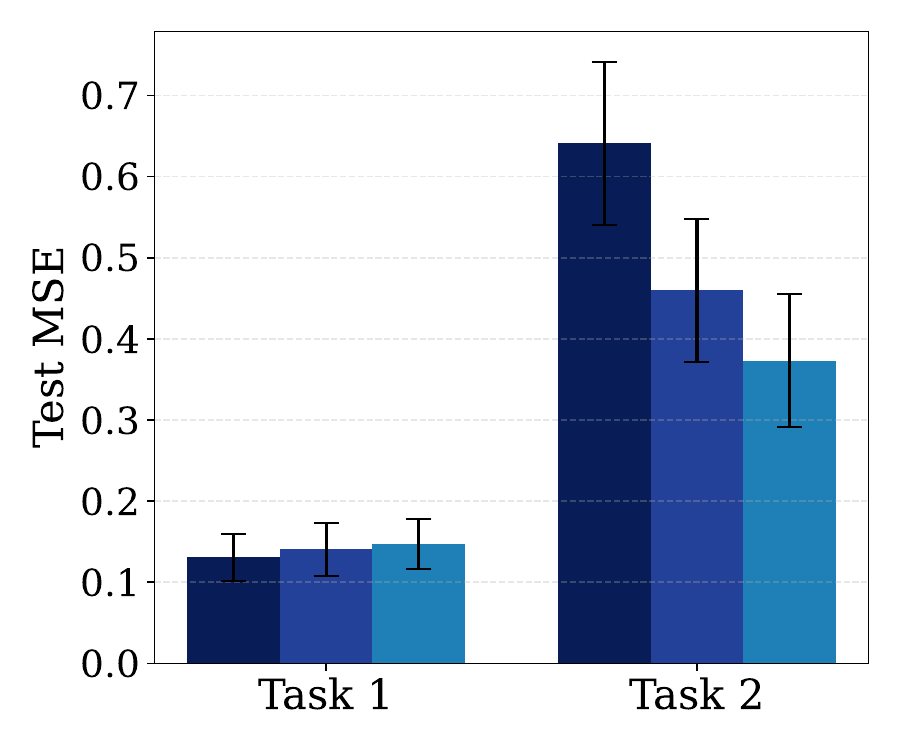}
        \vspace{.3em}
        \caption{\setting{High}$\to$\setting{Low}}
        \label{fig:transfer_adahessian_bar_case1}
    \end{subfigure}
    \begin{subfigure}{0.24\linewidth}
        \includegraphics[width=\linewidth]{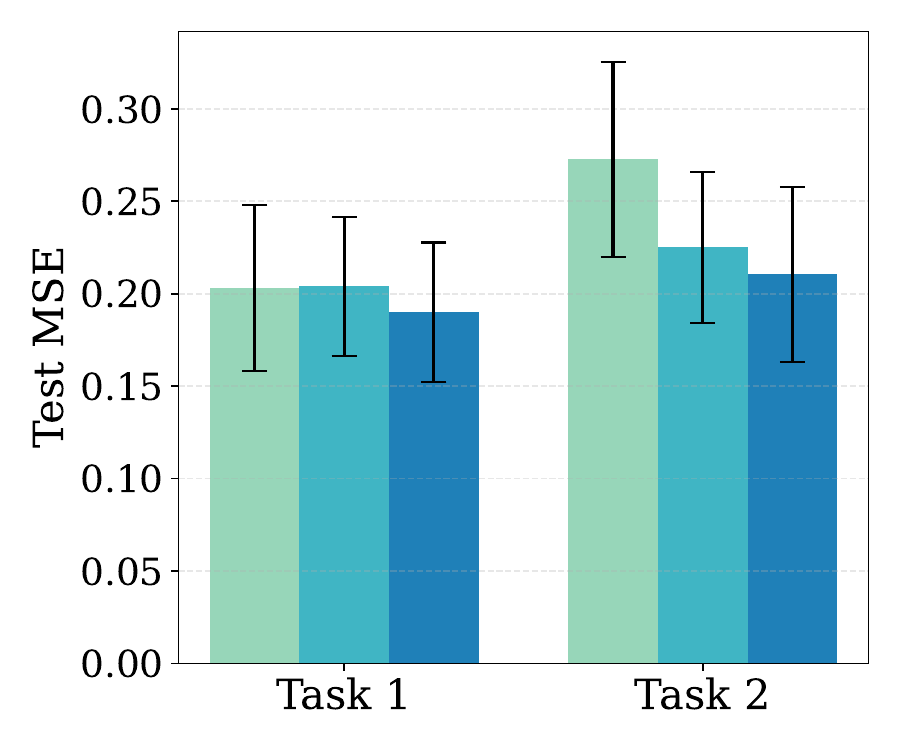}
        \vspace{.3em}
        \caption{\setting{Low}$\to$\setting{High}}
        \label{fig:transfer_adahessian_bar_case2}
    \end{subfigure}
    \vspace{-3.1em}
    \begin{center}
    \scriptsize
    \setlength{\tabcolsep}{3pt}
    \begin{tabular}{@{}llllllllll@{}}
    \raisebox{.4pt}{\color{tp0}\rule{4pt}{4pt}}   & $p=0$ &
    \raisebox{.4pt}{\color{tp05}\rule{4pt}{4pt}}  & $p=-0.5$ &
    \raisebox{.4pt}{\color{tp1}\rule{4pt}{4pt}}     & $p=-1$ &
    \raisebox{.4pt}{\color{tp15}\rule{4pt}{4pt}}  & $p=-1.5$ &
    \raisebox{.4pt}{\color{tp2}\rule{4pt}{4pt}}      & $p=-2$      
    \end{tabular}
    \end{center}
    \vspace{.8em}
    \caption{\textbf{The relationship between knowledge transferability and preconditioning.} 
    (a), (b) show the case where exact covariance preconditioning with different exponents $p$ is applied in {Task1}, and (c), (d) show the case where $p$ is swept using AdaHessian. For each case (\setting{High}, \setting{Low}), the test MSE of {Task1}/{Task2} is shown.  
    In {Task1}, generalization improves as $p$ increases in \setting{High}, and as $p$ decreases in \setting{Low}. In contrast, in {Task2}, where only the second layer is optimized while other components are fixed to model trained in {Task1}, $p = -1$ gives the best result in both cases.  
    Similar trends are reproduced in (c), (d), indicating that the spectral bias formed in {Task1} strongly influences knowledge transferability in {Task2}.}

    \label{fig:transfer}
    \vspace{-1.em}
\end{figure}

\subsection{On Knowledge Forward Transfer}
In transfer and continual learning, an important objective is to exploit knowledge from prior tasks to enhance performance on later tasks, a process commonly referred to as \emph{forward transfer}. 
The extent to which knowledge can be transferred largely depends on the spectral components that a model has learned to depend on in the source task. 
Since future task information is generally unavailable during pretraining, a task-agnostic approach is to learn features in a uniform manner across covariance eigen-directions. 
In what follows, we investigate which choices of preconditioner in the source task improve transferability to subsequent tasks, by controlling the spectral emphasis through the $p$.

We adapt the synthetic setup from Sec.~\ref{subsec:robust_to_noise}. 
Each case (\setting{High}, \setting{Low}) now comprises two tasks, {Task1} and {Task2}. 
For symmetry between tasks, we now set $\boldsymbol{S}_{d_x}^2=\lambda \boldsymbol{I}_{d/2}\oplus \lambda^{-1} \boldsymbol{I}_{d/2}$ with $d$ even.
In {Task1}, the teacher puts its signal on one extreme eigen-direction (largest for \setting{High}, smallest for \setting{Low}), so larger (smaller) $p$ should help in \setting{High} (\setting{Low}). 
In {Task2}, the {Task1} signal is made non-informative and moved to {one} opposite extreme direction.
We first train in {Task1} under varying $p$. 
For {Task2}, we initialize from the model trained in {Task1}, {freeze the first layer}, and optimize only the second layer. 
Since this is convex, we solve it by ridge regression in closed form; the regularization coefficient is tuned on {Task2} validation MSE.

Fig.~\ref{fig:transfer} summarizes the results. The {left two columns} vary the power $p$ for exact covariance preconditioning, and the {right two columns} vary $p$ for AdaHessian; within each pair, the {left column} is \setting{High} and the {right column} is \setting{Low}. 
For each case we contrast $p=-1$ against {larger} values ($p\in\{-0.5,0\}$) and {smaller} values ($p\in\{-1.5,-2\}$).
These bar plots show {Task~1} and {Task~2} test MSE across $p$. 
On {Task~1}, $p$ values larger (smaller) than $-1$ achieve better generalization in \setting{High} (\setting{Low}) in most cases, consistent with Sec.~\ref{subsec:robust_to_noise}. 
In contrast, on {Task~2} the best generalization is attained at \mbox{$p=-1$} for {both} \setting{High} and \setting{Low}.

The spectral bias induced by preconditioning on the source task strongly governs forward transfer: choosing $p=-1$, which treats all covariance eigen-directions uniformly, yields features that transfer broadly, accelerating optimization and improving generalization on subsequent tasks.
\textbf{While second-order optimization has recently attracted attention in large-scale pretraining primarily for its potential to reduce training cost, we argue that it is likewise warranted from the standpoint of knowledge-transfer performance.}

\section{Limitation and Future Work}
\vspace{-.5em}
Our insights, while strongly suggestive of our claims, do not constitute a formal proof. A more rigorous discussion, grounded in the theoretical analysis of deep learning dynamics, is necessary and we leave this for future work.

We also envision that our work could ultimately lead to a framework capable of automatically determining the optimal preconditioner to satisfy a user's desired trade-offs between generalization performance and convergence speed. While it is practically impossible to fully uncover the relationship between the teacher model and the manifold structure of the input data during training, the problem may become tractable if certain prior knowledge is available. For instance, under the assumption of a monotonic teacher model, it would be possible to identify signal-bearing components by measuring the correlation between each spectral direction and the corresponding labels.

\section{Conclusion}
\vspace{-.5em}
In this work, we reframed preconditioning in machine learning not merely as a tool for accelerating convergence, but as a principled mechanism for controlling {which} features are learned. Our central thesis is that preconditioning imposes a controllable spectral bias on feature learning by altering the input-space similarity metric. We instantiate this bias via the $p$-th power of the input covariance matrix, where the exponent $p$ determines whether high-variance or low-variance components dominate in the induced geometry. Our expanded analysis demonstrates that aligning this spectral emphasis with the teacher’s structure improves in-distribution generalization, out-of-distribution robustness, and forward knowledge transfer. These findings provide a unified, spectrum-centric perspective that helps resolve long-standing debates about optimizer choice and paves the way for task-aware optimizers that explicitly shape feature learning to achieve better generalization.



\bibliography{iclr2026_conference}
\bibliographystyle{iclr2026_conference}

\appendix
\newpage
\section{Proof of Theorem~\ref{theo1}}\label{apx:proof_theo1}
In this appendix we prove Theorem~\ref{theo1}. 
We begin by establishing
Proposition~\ref{prop:init_independent}, which provides the base conditional
independence at initialization.

\begin{proposition}\label{prop:init_independent}
Let $\boldsymbol{P}^{(0)}\in\mathbb{R}^{d_x\times d_x}$ be symmetric positive semi-definite (rank $r$),
$\boldsymbol{X}_{\mathrm{train}}\in\mathbb{R}^{d_x\times N}$, and $\boldsymbol{W}^{(0)}\in\mathbb{R}^{d_x\times d_h}$.
Assume the columns $\{\boldsymbol{w}_j\}_{j=1}^{d_h}$ of $\boldsymbol{W}^{(0)}$ are i.i.d.\ draws from
$\mathcal N(0,\sigma^2 \boldsymbol{P}^{(0)})$ and independent of $\boldsymbol{X}_{\mathrm{train}}$.
Define the pre-activations $\boldsymbol{Z}^{(0)} := (\boldsymbol{W}^{(0)})^\top \boldsymbol{X}_{\mathrm{train}}\in\mathbb{R}^{d_h\times N}$
and the $\boldsymbol{P}^{(0)}$-Gram matrix
$\boldsymbol{G}_{P}^{(0)} := \boldsymbol{X}_{\mathrm{train}}^\top \boldsymbol{P}^{(0)} \boldsymbol{X}_{\mathrm{train}}$.
Then
\[
I(\boldsymbol{Z}^{(0)};\,\boldsymbol{X}_{\mathrm{train}} \,|\, \boldsymbol{G}_{P}^{(0)})=0.
\]
\end{proposition}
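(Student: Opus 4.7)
The plan is to show that the conditional law of $\boldsymbol{Z}^{(0)}$ given $\boldsymbol{X}_{\mathrm{train}}$ depends on $\boldsymbol{X}_{\mathrm{train}}$ only through the Gram matrix $\boldsymbol{G}_P^{(0)}$. Since $\boldsymbol{G}_P^{(0)}$ is $\sigma(\boldsymbol{X}_{\mathrm{train}})$-measurable, this factorization is equivalent to the conditional independence $\boldsymbol{Z}^{(0)}\perp\boldsymbol{X}_{\mathrm{train}}\mid\boldsymbol{G}_P^{(0)}$, which immediately gives the vanishing conditional mutual information.

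First I would exploit the independence $\boldsymbol{W}^{(0)}\perp\boldsymbol{X}_{\mathrm{train}}$: conditioning on $\boldsymbol{X}_{\mathrm{train}}$ preserves the joint law of the columns $\{\boldsymbol{w}_j\}_{j=1}^{d_h}$, so each row of $\boldsymbol{Z}^{(0)}$, viewed as $\boldsymbol{z}_j:=\boldsymbol{X}_{\mathrm{train}}^\top\boldsymbol{w}_j\in\mathbb{R}^N$, is a linear transform of a Gaussian and hence itself a (possibly degenerate) centered Gaussian. Its conditional covariance is
\[
\mathrm{Cov}(\boldsymbol{z}_j\mid\boldsymbol{X}_{\mathrm{train}})\;=\;\sigma^2\,\boldsymbol{X}_{\mathrm{train}}^\top\boldsymbol{P}^{(0)}\boldsymbol{X}_{\mathrm{train}}\;=\;\sigma^2\,\boldsymbol{G}_P^{(0)},
\]
and independence of the $\boldsymbol{w}_j$ gives conditional independence across $j$, so the joint conditional law of $\boldsymbol{Z}^{(0)}$ is a product of matching Gaussians.

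Second, to make the factorization precise without assuming $\boldsymbol{P}^{(0)}$ or $\boldsymbol{G}_P^{(0)}$ has full rank, I would identify the conditional law via its characteristic function. For any $\boldsymbol{T}\in\mathbb{R}^{d_h\times N}$,
\[
\mathbb{E}\!\left[\exp\!\bigl(i\,\mathrm{tr}(\boldsymbol{T}^{\!\top}\boldsymbol{Z}^{(0)})\bigr)\,\Big|\,\boldsymbol{X}_{\mathrm{train}}\right]
\;=\;
\exp\!\left(-\tfrac{\sigma^2}{2}\,\mathrm{tr}(\boldsymbol{T}\,\boldsymbol{G}_P^{(0)}\,\boldsymbol{T}^{\!\top})\right),
\]
which is a deterministic function of $\boldsymbol{G}_P^{(0)}$ alone. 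Uniqueness of characteristic functions then forces $\mathcal{L}(\boldsymbol{Z}^{(0)}\mid\boldsymbol{X}_{\mathrm{train}})=\mathcal{L}(\boldsymbol{Z}^{(0)}\mid\boldsymbol{G}_P^{(0)})$ almost surely, closing the argument.

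The main (mild) obstacle is handling the degenerate-Gaussian case: when $\boldsymbol{P}^{(0)}$ has rank $r<d_x$, or when $\boldsymbol{X}_{\mathrm{train}}$ is itself rank-deficient, the conditional distribution of $\boldsymbol{z}_j$ has no Lebesgue density, and a density-based proof would have to restrict to the random image subspace $\mathrm{col}(\boldsymbol{G}_P^{(0)})$ and argue about measurability of its random dimension. Routing the proof through the characteristic function sidesteps this issue entirely, which is why I would prefer it over an explicit density identification.
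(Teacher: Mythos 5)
Your proof is correct, but it takes a genuinely different route from the paper's. The paper factorizes $\boldsymbol{P}^{(0)}=\boldsymbol{S}^\top\boldsymbol{S}$, rewrites $\boldsymbol{Z}^{(0)}=\boldsymbol{U}^\top(\boldsymbol{S}\boldsymbol{X}_{\mathrm{train}})$ with $\boldsymbol{U}$ having i.i.d.\ isotropic Gaussian columns, invokes the isotropic-initialization result of \citet{wadia2021whitening} as a black box to get conditional independence of $\boldsymbol{Z}^{(0)}$ from the reduced data $\boldsymbol{S}\boldsymbol{X}_{\mathrm{train}}$ given its Gram matrix, and then transfers the statement back to $\boldsymbol{X}_{\mathrm{train}}$ via the Markov chain $\boldsymbol{X}_{\mathrm{train}}\to\boldsymbol{S}\boldsymbol{X}_{\mathrm{train}}\to\boldsymbol{Z}^{(0)}$ and the conditional data-processing inequality. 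You instead compute the conditional law of $\boldsymbol{Z}^{(0)}$ given $\boldsymbol{X}_{\mathrm{train}}$ directly: conditionally, the rows $\boldsymbol{z}_j=\boldsymbol{X}_{\mathrm{train}}^\top\boldsymbol{w}_j$ are i.i.d.\ $\mathcal N(0,\sigma^2\boldsymbol{G}_P^{(0)})$, and the conditional characteristic function $\exp\!\bigl(-\tfrac{\sigma^2}{2}\mathrm{tr}(\boldsymbol{T}\boldsymbol{G}_P^{(0)}\boldsymbol{T}^\top)\bigr)$ is a measurable function of $\boldsymbol{G}_P^{(0)}$ alone, which (since $\boldsymbol{G}_P^{(0)}$ is $\sigma(\boldsymbol{X}_{\mathrm{train}})$-measurable) yields the conditional independence by the tower property. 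Your version is self-contained --- it effectively reproves, in weighted form, the isotropic lemma the paper cites --- it identifies the exact conditional distribution rather than just the independence, and routing through characteristic functions handles the rank-deficient case without the subspace bookkeeping a density argument would need. What the paper's reduction buys is brevity given the cited result and a clean modular structure (any extension of the isotropic base case would propagate automatically through the factorization-plus-DPI step). Both arguments are valid; the only point worth making explicit in yours is the standard fact that a regular conditional distribution of $\boldsymbol{Z}^{(0)}$ given $\boldsymbol{X}_{\mathrm{train}}$ that factors through a $\sigma(\boldsymbol{X}_{\mathrm{train}})$-measurable statistic as a Markov kernel implies the stated conditional independence, which you correctly assert.
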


\begin{proof}
Let $\boldsymbol{P}^{(0)}=\boldsymbol{S}^\top \boldsymbol{S}$ be a (rank-$r$) factorization with $\boldsymbol{S}\in\mathbb{R}^{r\times d_x}$.
For each column $\boldsymbol{w}_j$ of $\boldsymbol{W}^{(0)}$, there exists $\boldsymbol{u}_j\sim \mathcal N(0,\sigma^2 \boldsymbol{I}_r)$ such that
$\boldsymbol{w}_j = \boldsymbol{S}^\top \boldsymbol{u}_j$ in distribution. Stacking the $\boldsymbol{u}_j$ as columns gives
$\boldsymbol{U}\in\mathbb{R}^{r\times d_h}$ with i.i.d.\ columns $\boldsymbol{u}_j\sim\mathcal N(0,\sigma^2 \boldsymbol{I}_r)$ and
\[
\boldsymbol{W}^{(0)} \ \stackrel{d}{=}\ \boldsymbol{S}^\top \boldsymbol{U}.
\]
Define the reduced data $\boldsymbol{X}_{\mathrm{train}}' := \boldsymbol{S} \boldsymbol{X}_{\mathrm{train}}\in\mathbb{R}^{r\times N}$. Then
\[
\boldsymbol{Z}^{(0)} \ =\ (\boldsymbol{W}^{(0)})^\top \boldsymbol{X}_{\mathrm{train}}
          \ =\ (\boldsymbol{S}^\top \boldsymbol{U})^\top \boldsymbol{X}_{\mathrm{train}}
          \ =\ \boldsymbol{U}^\top (\boldsymbol{S} \boldsymbol{X}_{\mathrm{train}})
          \ =\ \boldsymbol{U}^\top \boldsymbol{X}_{\mathrm{train}}'.
\]
Because $\boldsymbol{U}$ has isotropic Gaussian columns and is independent of $\boldsymbol{X}_{\mathrm{train}}'$, the isotropic case
(Appendix~A / \citet{wadia2021whitening}) gives
\[
I\!\left(\boldsymbol{Z}^{(0)};\,\boldsymbol{X}_{\mathrm{train}}' \,\middle|\, (\boldsymbol{X}_{\mathrm{train}}')^\top \boldsymbol{X}_{\mathrm{train}}'\right)=0.
\]
Noting $(\boldsymbol{X}_{\mathrm{train}}')^\top \boldsymbol{X}_{\mathrm{train}}' = \boldsymbol{X}_{\mathrm{train}}^\top \boldsymbol{S}^\top \boldsymbol{S} \boldsymbol{X}_{\mathrm{train}}
= \boldsymbol{X}_{\mathrm{train}}^\top \boldsymbol{P}^{(0)} \boldsymbol{X}_{\mathrm{train}} = \boldsymbol{G}_P^{(0)}$, we have
\[
I\!\left(\boldsymbol{Z}^{(0)};\,\boldsymbol{X}_{\mathrm{train}}' \,\middle|\, \boldsymbol{G}_P^{(0)}\right)=0.
\]
Finally, $\boldsymbol{Z}^{(0)}=\boldsymbol{U}^\top \boldsymbol{X}_{\mathrm{train}}'$ implies the Markov chain
$\boldsymbol{X}_{\mathrm{train}} \to \boldsymbol{X}_{\mathrm{train}}' \to \boldsymbol{Z}^{(0)}$ (since $\boldsymbol{U}\!\perp\!\boldsymbol{X}_{\mathrm{train}}$),
and hence by the conditional data-processing inequality
\[
I\!\left(\boldsymbol{Z}^{(0)};\,\boldsymbol{X}_{\mathrm{train}} \,\middle|\, \boldsymbol{G}_P^{(0)}\right)
\ \le\ 
I\!\left(\boldsymbol{Z}^{(0)};\,\boldsymbol{X}_{\mathrm{train}}' \,\middle|\, \boldsymbol{G}_P^{(0)}\right)=0,
\]
as claimed.
\end{proof}

We now prove Theorem~\ref{theo1}
using the following lemmas.

\begin{lemma}\label{lem:1}
Let $A,X,C,Y$ be random elements. and let $B=\psi(A,C,Y)$ be a measurable function of $(A,C,Y)$. If $A \perp X \mid C$ and $Y \perp A \mid (X,C)$, then $B \perp X \mid (C,Y)$.
\end{lemma}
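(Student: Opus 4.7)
The plan is to split the proof of Lemma~\ref{lem:1} into two stages. First, I would establish the conditional independence $A \perp X \mid (C, Y)$ at the level of the underlying random elements, and then transfer it to $B$ via its representation as a measurable function of $(A, C, Y)$. The transfer step is essentially automatic: once $A \perp X \mid (C, Y)$ is in hand, the standard closure property ``if $U \perp V \mid W$ and $T$ is a measurable function of $(U, W)$, then $T \perp V \mid W$'' applied with $U = A$, $V = X$, $W = (C, Y)$, and $T = \psi(A, C, Y) = B$ immediately yields $B \perp X \mid (C, Y)$. The point is that, after conditioning on $(C, Y)$, $B$ becomes a measurable function of $A$ alone.

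For the first stage, I would route the argument through two standard graphoid axioms. By symmetry, the hypothesis $Y \perp A \mid (X, C)$ is equivalent to $A \perp Y \mid (X, C)$. Combining this with the other hypothesis $A \perp X \mid C$, the contraction axiom gives $A \perp (X, Y) \mid C$, and weak union then reduces this to the target $A \perp X \mid (C, Y)$. If a hands-on derivation is preferred, the same conclusion follows by the chain-rule factorization
\[
p(A, X, Y \mid C) = p(Y \mid A, X, C)\, p(A \mid X, C)\, p(X \mid C) = p(Y \mid X, C)\, p(A \mid C)\, p(X \mid C),
\]
where the second equality uses both hypotheses; dividing by $p(Y \mid C)$ gives $p(A, X \mid C, Y) = p(A \mid C)\, p(X \mid C, Y)$, and marginalizing over $X$ identifies $p(A \mid C, Y)$ with $p(A \mid C)$, producing the required factorization.

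The main obstacle is measure-theoretic rigor: since $A, X, C, Y$ are only assumed to be random elements, conditional densities need not exist, and the graphoid manipulations should be carried out through the $\sigma$-algebra/conditional-expectation formulation of conditional independence. Concretely, I would verify $\mathbb{E}[fg \mid C, Y] = \mathbb{E}[f \mid C, Y]\, \mathbb{E}[g \mid C, Y]$ for bounded measurable $f \in \sigma(A)$ and $g \in \sigma(X)$ by iterated conditioning, pulling $g$ through $\mathbb{E}[\,\cdot \mid X, C, Y]$ using hypothesis $Y \perp A \mid (X, C)$, and then collapsing the $X$-dependence using hypothesis $A \perp X \mid C$. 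The final transfer to $B$ is then clean because $\sigma(B) \subseteq \sigma(A, C, Y)$ and conditioning on $(C, Y)$ leaves $B$ measurable with respect to $\sigma(A)$ augmented by the conditioning $\sigma$-algebra. Apart from this technical care, the argument is a two-line graphoid deduction followed by a one-line closure step.
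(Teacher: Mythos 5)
Your proposal is correct. Note that the paper itself states Lemma~\ref{lem:1} without proof, so there is no in-paper argument to compare against; your derivation is the natural one and fills that gap. The two-stage structure is sound: the hypotheses $A \perp X \mid C$ and $A \perp Y \mid (X,C)$ combine via contraction to give $A \perp (X,Y) \mid C$, weak union then yields $A \perp X \mid (C,Y)$, and the closure of conditional independence under measurable functions of $(A,C,Y)$ transfers this to $B$. Since contraction, weak union, and the functional-closure property are semi-graphoid facts valid for arbitrary random elements (provable via the conditional-expectation characterization exactly as you sketch, with no need for densities), the density-based computation is dispensable and the measure-theoretic caveat you raise is handled by your own $\sigma$-algebra argument. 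No gaps.
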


\begin{lemma}\label{lem:2}
Let $A,X,C$ be random elements and let $D=\phi(X,C)$ be a measurable function of $(X,C)$.
If $A\perp X\mid C$, then $A\perp X\mid (C,D)$.
\end{lemma}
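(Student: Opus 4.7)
The plan is to exploit the fact that $D=\phi(X,C)$ is a deterministic (measurable) function of $(X,C)$, so that the $\sigma$-algebra generated by $D$ is contained in that generated by $(X,C)$, i.e.\ $\sigma(D)\subseteq\sigma(X,C)$. This means augmenting the conditioning set with $D$ on top of $(X,C)$ adds nothing, while augmenting $C$ with $D$ does add information only about $X$, not about $A$. The cleanest execution is via the chain rule for conditional mutual information; a direct measure-theoretic argument using conditional expectations is an equally short alternative.

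First I would apply the chain rule two different ways to the quantity $I(A;(X,D)\mid C)$. Expanding in the order $(X,D)$ gives
\begin{equation*}
I(A;(X,D)\mid C)=I(A;X\mid C)+I(A;D\mid X,C).
\end{equation*}
The hypothesis $A\perp X\mid C$ makes the first term vanish, and the second term vanishes because $D$ is $\sigma(X,C)$-measurable, so $H(D\mid X,C)=0$. Hence $I(A;(X,D)\mid C)=0$. Expanding instead in the order $(D,X)$ gives
\begin{equation*}
I(A;(X,D)\mid C)=I(A;D\mid C)+I(A;X\mid C,D).
\end{equation*}
Combining the two expansions yields $I(A;D\mid C)+I(A;X\mid C,D)=0$. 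Since both summands are nonnegative, each vanishes, and in particular $I(A;X\mid C,D)=0$, which is the desired $A\perp X\mid (C,D)$.

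I expect the only subtlety to be ensuring the argument is valid for general (not necessarily discrete or absolutely continuous) random elements, where Shannon information may require care. This is easy to sidestep with a measure-theoretic proof: for every bounded measurable $f$, the inclusion $\sigma(D)\subseteq\sigma(X,C)$ gives $\mathbb{E}[f(A)\mid X,C,D]=\mathbb{E}[f(A)\mid X,C]$, which equals $\mathbb{E}[f(A)\mid C]$ by $A\perp X\mid C$. Since $\mathbb{E}[f(A)\mid C]$ is $\sigma(C,D)$-measurable, taking $\mathbb{E}[\,\cdot\mid C,D]$ on both sides (and using the tower property on the left) yields $\mathbb{E}[f(A)\mid C,D]=\mathbb{E}[f(A)\mid C]$. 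Therefore $\mathbb{E}[f(A)\mid X,C,D]=\mathbb{E}[f(A)\mid C,D]$ for all bounded measurable $f$, which is exactly $A\perp X\mid (C,D)$.
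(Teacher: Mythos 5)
Your proof is correct. Note that the paper itself states Lemma~\ref{lem:2} without any proof (it is used as a black-box tool in the induction for Theorem~\ref{theo1}), so there is no in-paper argument to compare against; your write-up fills that gap. Both of your routes work: the double chain-rule expansion of $I(A;(X,D)\mid C)$ correctly uses $I(A;X\mid C)=0$ and $I(A;D\mid X,C)=0$ (the latter because $D$ is degenerate given $(X,C)$), and nonnegativity of conditional mutual information then kills $I(A;X\mid C,D)$. Your measure-theoretic variant is the cleaner one for fully general random elements, since it rests only on the inclusion $\sigma(C,D)\subseteq\sigma(X,C,D)=\sigma(X,C)$ and the tower property, avoiding any discussion of whether the mutual informations involved are finite or how conditional independence is characterized by vanishing mutual information in the general (non-discrete, non-density) case. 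Either version would be a suitable proof to append to the paper.
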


By Proposition~\ref{prop:init_independent}, 
$\boldsymbol{Z}^{(0)} \perp \boldsymbol{X}_{\mathrm{train}} \mid \boldsymbol{G}_P^{(0)}$.
Since $\boldsymbol{\theta}_{2:L}^{(0)}$ is initialized independently of $(\boldsymbol{X}_{\mathrm{train}},\boldsymbol{P}^{(0)})$,
we also have $(\boldsymbol{Z}^{(0)},\boldsymbol{\theta}_{2:L}^{(0)}) \perp \boldsymbol{X}_{\mathrm{train}} \mid \boldsymbol{G}_P^{(0)}$.
Moreover, by the data-generating assumption,
$\boldsymbol{Y}_{\mathrm{train}} \perp (\boldsymbol{Z}^{(0)},\boldsymbol{\theta}_{2:L}^{(0)}) \mid (\boldsymbol{X}_{\mathrm{train}},\boldsymbol{G}_P^{(0)})$.

By Eq.~\ref{eq:z_train_update}, the update rule of $\boldsymbol{\theta}_{2:L}$ and Assumption~1.Q, the updates at $t=0$ are measurable functions of
$(\boldsymbol{Z}^{(0)},\boldsymbol{\theta}_{2:L}^{(0)},\boldsymbol{Y}_{\mathrm{train}},\boldsymbol{G}_P^{(0)})$, i.e.,
\[
  (\boldsymbol{Z}^{(1)},\boldsymbol{\theta}_{2:L}^{(1)}) 
  = \psi_0\!\big(\boldsymbol{Z}^{(0)},\boldsymbol{\theta}_{2:L}^{(0)},\boldsymbol{Y}_{\mathrm{train}},\boldsymbol{G}_P^{(0)}\big)
\]
for some measurable $\psi_0$. Applying Lemma~\ref{lem:1} with
$A=(\boldsymbol{Z}^{(0)},\boldsymbol{\theta}_{2:L}^{(0)})$, $X=\boldsymbol{X}_{\mathrm{train}}$, $C=\boldsymbol{G}_P^{(0)}$, $Y=\boldsymbol{Y}_{\mathrm{train}}$,
we obtain
\[
  (\boldsymbol{Z}^{(1)},\boldsymbol{\theta}_{2:L}^{(1)}) \ \perp\ \boldsymbol{X}_{\mathrm{train}}
  \ \big|\ \big(\boldsymbol{G}_P^{(0)},\,\boldsymbol{Y}_{\mathrm{train}}\big),
\]
which is Eq.~\ref{eq:train_independent} for $t=1$.

Assume for some $t\ge 1$ that
\begin{equation}\label{eq:IH}
  (\boldsymbol{Z}^{(t)},\boldsymbol{\theta}_{2:L}^{(t)}) \;\perp\; \boldsymbol{X}_{\mathrm{train}} \;\big|\; (\{\boldsymbol{G}_P^{(s)}\}_{s=0}^{t-1},\,\boldsymbol{Y}_{\mathrm{train}}).
\end{equation}

By Assumption~1.P, $\boldsymbol{G}_P^{(t)}=\phi_t(\boldsymbol{X}_{\mathrm{train}})$ depends only on $\boldsymbol{X}_{\mathrm{train}}$.
Applying Lemma~\ref{lem:2} to Eq.~\ref{eq:IH} with 
$A=(\boldsymbol{Z}^{(t)},\boldsymbol{\theta}_{2:L}^{(t)})$, $C=(\{\boldsymbol{G}_P^{(s)}\}_{s=0}^{t-1},\boldsymbol{Y}_{\mathrm{train}})$, $D=\boldsymbol{G}_P^{(t)}$ yields
\begin{equation}\label{eq:add-Gt}
  (\boldsymbol{Z}^{(t)},\boldsymbol{\theta}_{2:L}^{(t)}) \;\perp\; \boldsymbol{X}_{\mathrm{train}} \;\big|\; (\{\boldsymbol{G}_P^{(s)}\}_{s=0}^{t},\,\boldsymbol{Y}_{\mathrm{train}}).
\end{equation}

By Eq.~\ref{eq:z_train_update}, the update rule of $\boldsymbol{\theta}_{2:L}$ and Assumption~1.Q, $(\boldsymbol{Z}^{(t+1)},\boldsymbol{\theta}_{2:L}^{(t+1)})$ is a measurable function of 
$\big(\boldsymbol{Z}^{(t)},\boldsymbol{\theta}_{2:L}^{(t)},\boldsymbol{Y}_{\mathrm{train}},\boldsymbol{G}_P^{(t)}\big)$.
and applying Eq.~\ref{eq:add-Gt} and Lemma~\ref{lem:1} with $C=(\{\boldsymbol{G}_P^{(s)}\}_{s=0}^{t},\boldsymbol{Y}_{\mathrm{train}})$ gives
\[
  (\boldsymbol{Z}^{(t+1)},\boldsymbol{\theta}_{2:L}^{(t+1)}) \;\perp\; \boldsymbol{X}_{\mathrm{train}} \;\big|\; (\{\boldsymbol{G}_P^{(s)}\}_{s=0}^{t},\,\boldsymbol{Y}_{\mathrm{train}}),
\]
i.e.,
\[
  I\!\big((\boldsymbol{Z}^{(t+1)},\boldsymbol{\theta}_{2:L}^{(t+1)});\ \boldsymbol{X}_{\mathrm{train}} \ \big|\ \{\boldsymbol{G}_P^{(s)}\}_{s=0}^{t},\,\boldsymbol{Y}_{\mathrm{train}}\big)=0.
\]
This is Eq.~\ref{eq:train_independent} with $t$ replaced by $t+1$, completing the induction.

\qed

\section{Proof of Theorem~\ref{theo2}}\label{apx:proof_theo2}
We prove Theorem~\ref{theo2} by reusing Proposition~\ref{prop:init_independent} and the inductive scheme of
Theorem~\ref{theo1}, together with the following lemma.

\begin{lemma}\label{lem:add-exo}
Let $A,X,C,R$ be random elements with $R\perp(A,X,C)$ and let $D=\phi(X,C,R)$ be measurable.
If $A\perp X\mid C$, then $A\perp X\mid (C,D)$.
\end{lemma}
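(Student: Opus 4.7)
The plan is to verify the definition of conditional independence directly, by showing that for every bounded measurable $f$, $E[f(A)\mid X,C,D] = E[f(A)\mid C,D]$ almost surely. A natural temptation is to first enlarge the conditioning to $(C,R,D)$: using $R\perp(A,X,C)$ together with $A\perp X\mid C$ one can show $A\perp X\mid(C,R)$, then apply Lemma~\ref{lem:2} with conditioning variable $(C,R)$ and the function $D=\phi(X,C,R)$ to obtain $A\perp X\mid(C,R,D)$, and finally try to marginalize out $R$. I expect the main obstacle to be precisely this last marginalization step: because $D=\phi(X,C,R)$ typically couples $X$ and $R$ once we condition on $D$, we do not have $R\perp X\mid(C,D)$ in general, so the straightforward contraction argument stalls. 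To sidestep this I would instead project directly from the enlarged $\sigma$-algebra $\sigma(X,C,R)$ onto the smaller $\sigma$-algebra $\sigma(X,C,D)$ using the tower property.

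Concretely, the first step is to exploit $R\perp(A,X,C)$, which yields $A\perp R\mid(X,C)$ and hence $E[f(A)\mid X,C,R]=E[f(A)\mid X,C]$ almost surely. Combining this with the hypothesis $A\perp X\mid C$ gives $E[f(A)\mid X,C,R]=E[f(A)\mid C]$, a $\sigma(C)$-measurable quantity. The second step is to note that $D=\phi(X,C,R)$ is $\sigma(X,C,R)$-measurable, so $\sigma(X,C,D)\subseteq\sigma(X,C,R)$; by the tower property, $E[f(A)\mid X,C,D]=E\!\big[E[f(A)\mid X,C,R]\,\big|\,X,C,D\big]=E[f(A)\mid C]$. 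The third step is to average both sides conditionally on $(C,D)$: since $E[f(A)\mid C]$ is already $\sigma(C)\subseteq\sigma(C,D)$-measurable, taking $E[\,\cdot\mid C,D]$ leaves it unchanged, whereas on the left we obtain $E[f(A)\mid C,D]$ by the tower property. Chaining the two identities yields $E[f(A)\mid X,C,D]=E[f(A)\mid C,D]$ a.s., which is exactly $A\perp X\mid(C,D)$.

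The conceptual content, then, is that we do not need to eliminate $R$ as an explicit conditioning variable; we only need to collapse the conditional expectation down to something $\sigma(C)$-measurable while still on the large $\sigma$-algebra $\sigma(X,C,R)$, and that collapse is then automatically preserved under any sub-$\sigma$-algebra containing $\sigma(C)$, in particular $\sigma(X,C,D)$ and $\sigma(C,D)$. The only mildly delicate point to write carefully is the first identity $E[f(A)\mid X,C,R]=E[f(A)\mid X,C]$, which rests on the full joint independence $R\perp(A,X,C)$ rather than the weaker $R\perp(X,C)$, since we need $R$ to carry no information about $A$ even after $(X,C)$ is revealed.
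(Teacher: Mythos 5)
Your proof is correct. Note that the paper itself states Lemma~\ref{lem:add-exo} without any proof (as it does for Lemmas~\ref{lem:1} and~\ref{lem:2}), so there is no in-paper argument to compare against; your write-up supplies the missing justification. The chain of identities
\[
E[f(A)\mid X,C,D]
= E\bigl[E[f(A)\mid X,C,R]\,\bigm|\,X,C,D\bigr]
= E\bigl[E[f(A)\mid C]\,\bigm|\,X,C,D\bigr]
= E[f(A)\mid C]
= E[f(A)\mid C,D]
\]
is valid: the first equality uses $\sigma(X,C,D)\subseteq\sigma(X,C,R)$ (because $D=\phi(X,C,R)$) and the tower property; the second uses $E[f(A)\mid X,C,R]=E[f(A)\mid X,C]=E[f(A)\mid C]$, which you correctly derive from $A\perp R\mid(X,C)$ (a consequence of the \emph{joint} independence $R\perp(A,X,C)$, as you rightly emphasize) together with the hypothesis $A\perp X\mid C$; and the last two equalities hold because $E[f(A)\mid C]$ is measurable with respect to both $\sigma(C,D)$ and $\sigma(X,C,D)$. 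Your diagnosis of why the naive route fails --- that $R\not\perp X\mid(C,D)$ in general, so one cannot simply contract $R$ out of $A\perp X\mid(C,R,D)$ --- is also accurate and is precisely the reason the tower-property projection onto $\sigma(X,C,D)$ is the right move.
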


By applying the same reasoning as in Appendix~\ref{apx:proof_theo1}, we obtain $(\boldsymbol{Z}_{\mathrm{train}}^{(0)},\boldsymbol{\theta}_{2:L}^{(0)}) \perp \boldsymbol{X}_{\mathrm{train}} \mid \boldsymbol{G}_P^{(0)}$.
Moreover $\boldsymbol{z}^{(0)}=\boldsymbol{W}_1^{(0)\top}\boldsymbol{x}$ depends only on $(\boldsymbol{W}_1^{(0)},\boldsymbol{x})$ which are independent of $\boldsymbol{X}_{\mathrm{train}}$,
so $(\boldsymbol{z}^{(0)},\boldsymbol{Z}_{\mathrm{train}}^{(0)},\boldsymbol{\theta}_{2:L}^{(0)})\perp \boldsymbol{X}_{\mathrm{train}}\mid \boldsymbol{G}_P^{(0)}$.
Since $\boldsymbol{c}_P^{(0)}=\boldsymbol{X}_{\mathrm{train}}^\top \boldsymbol{P}^{(0)}\boldsymbol{x}$ is a measurable function of $(\boldsymbol{X}_{\mathrm{train}},\boldsymbol{G}_P^{(0)},\boldsymbol{x})$ and
$\boldsymbol{x}\perp (\boldsymbol{X}_{\mathrm{train}},\boldsymbol{G}_P^{(0)},\boldsymbol{z}^{(0)},\boldsymbol{Z}_{\mathrm{train}}^{(0)},\boldsymbol{\theta}_{2:L}^{(0)})$, Lemma~\ref{lem:add-exo}
with $A=(\boldsymbol{z}^{(0)},\boldsymbol{Z}_{\mathrm{train}}^{(0)},\boldsymbol{\theta}_{2:L}^{(0)})$, $C=\boldsymbol{G}_P^{(0)}$, $R=\boldsymbol{x}$, $D=\boldsymbol{c}_P^{(0)}$
gives
\[
(\boldsymbol{z}^{(0)},\boldsymbol{Z}_{\mathrm{train}}^{(0)},\boldsymbol{\theta}_{2:L}^{(0)})\ \perp\ \boldsymbol{X}_{\mathrm{train}}\ \big|\ (\boldsymbol{G}_P^{(0)},\boldsymbol{c}_P^{(0)}).
\]
Using the update equations (Eqs.~\ref{eq:z_train_update}, \ref{eq:z_test_update}) and Assumption~1.Q,
there exists a measurable map $\psi_0$ such that
\[
(\boldsymbol{z}^{(1)},\boldsymbol{\theta}_{2:L}^{(1)})
=\psi_0\!\big(\boldsymbol{z}^{(0)},\boldsymbol{Z}_{\mathrm{train}}^{(0)},\boldsymbol{\theta}_{2:L}^{(0)},\boldsymbol{Y}_{\mathrm{train}},\ \boldsymbol{G}_P^{(0)},\boldsymbol{c}_P^{(0)}\big).
\]
By Lemma~\ref{lem:1}, we obtain
\[
(\boldsymbol{z}^{(1)},\boldsymbol{\theta}_{2:L}^{(1)})\ \perp\ \boldsymbol{X}_{\mathrm{train}}\ \big|\ (\boldsymbol{G}_P^{(0)},\boldsymbol{c}_P^{(0)},\boldsymbol{Y}_{\mathrm{train}}),
\]
which is the claim for $t=1$.

Assume for some $t\ge1$ that
\[
(\boldsymbol{z}^{(t)},\boldsymbol{\theta}_{2:L}^{(t)})\ \perp\ \boldsymbol{X}_{\mathrm{train}}\ \big|\ (\{\boldsymbol{G}_P^{(s)}\}_{s=0}^{t-1},\{\boldsymbol{c}_P^{(s)}\}_{s=0}^{t-1},\boldsymbol{Y}_{\mathrm{train}}).
\]

By Assumption~1.P, $\boldsymbol{G}_P^{(t)}=\phi_t(\boldsymbol{X}_{\mathrm{train}})$ depends only on $\boldsymbol{X}_{\mathrm{train}}$, so
Lemma~\ref{lem:2} implies
\[
(\boldsymbol{z}^{(t)},\boldsymbol{\theta}_{2:L}^{(t)})\ \perp\ \boldsymbol{X}_{\mathrm{train}}\ \big|\ (\{\boldsymbol{G}_P^{(s)}\}_{s=0}^{t},\{\boldsymbol{c}_P^{(s)}\}_{s=0}^{t-1},\boldsymbol{Y}_{\mathrm{train}}).
\]

Since $\boldsymbol{c}_P^{(t)}=\boldsymbol{X}_{\mathrm{train}}^\top \boldsymbol{P}^{(t)}\boldsymbol{x}$ is a measurable function of
$(\boldsymbol{X}_{\mathrm{train}},\{\boldsymbol{G}_P^{(s)}\}_{s=0}^{t},\boldsymbol{x})$ and $\boldsymbol{x}\perp (\boldsymbol{X}_{\mathrm{train}},\{\boldsymbol{G}_P^{(s)}\}_{s=0}^{t},\boldsymbol{z}^{(t)},\boldsymbol{\theta}_{2:L}^{(t)})$,
Lemma~\ref{lem:add-exo} yields
\[
(\boldsymbol{z}^{(t)},\boldsymbol{\theta}_{2:L}^{(t)})\ \perp\ \boldsymbol{X}_{\mathrm{train}}\ \big|\ (\{\boldsymbol{G}_P^{(s)}\}_{s=0}^{t},\{\boldsymbol{c}_P^{(s)}\}_{s=0}^{t},\boldsymbol{Y}).
\]

By Eqs.~\ref{eq:z_train_update}, \ref{eq:z_test_update} and Assumption~1.Q,
\[
(\boldsymbol{z}^{(t+1)},\boldsymbol{\theta}_{2:L}^{(t+1)})
=\psi_t\!\big(\boldsymbol{z}^{(t)},\boldsymbol{\theta}_{2:L}^{(t)},\boldsymbol{Y}_{\mathrm{train}},\ \boldsymbol{G}_P^{(t)},\boldsymbol{c}_P^{(t)}\big)
\]
for some measurable $\psi_t$. Applying Lemma~\ref{lem:1} with $C=(\{\boldsymbol{G}_P^{(s)}\}_{s=0}^{t},\boldsymbol{Y})$ (and noting that
$\boldsymbol{c}_P^{(t)}$ is already included via $\mathcal{C}_{t}$) gives
\[
(\boldsymbol{z}^{(t+1)},\boldsymbol{\theta}_{2:L}^{(t+1)})\ \perp\ \boldsymbol{X}_{\mathrm{train}}\ \big|\ (\{\boldsymbol{G}_P^{(s)}\}_{s=0}^{t},\mathcal{C}_{t},\boldsymbol{Y}_{\mathrm{train}}).
\]
This is precisely the desired statement at time $t+1$, completing the induction.

\qed

\section{Hessian structure for the first layer}\label{sec:hessian_weited_cov}

The gradient with respect to the first-layer weights is
\begin{equation}
      \frac{\partial L}{\partial \boldsymbol{W}_1} 
      = \boldsymbol{X}_{\text{train}}
        \left(\frac{\partial L}{\partial \boldsymbol{Z}_{\text{train}}}\right)^{\!\top}.
\end{equation}
Thus, for neuron $j$ with weight vector $\boldsymbol{w}_{1,j}$, we have
\begin{equation}
   \nabla_{\boldsymbol{w}_{1,j}} L
   = \sum_{i} a_{i,j}\,\boldsymbol{x}_i,
\end{equation}
with $a_{i,j} := \partial L / \partial z_{i,j}$.
Differentiating once more gives
\begin{equation}
   \nabla^2_{\boldsymbol{w}_{1,j}} L
   = \sum_{i} b_{i,j}\,\boldsymbol{x}_i\boldsymbol{x}_i^\top,
\end{equation}
with $b_{i,j} := \partial \delta_{i,j} /\partial z_{i,j}.$
Hence, the Hessian block for neuron $j$ is a weighted covariance matrix of the inputs,
with weights $b_{i,j}$ determined by the loss curvature and backpropagated signals.

\section{Additional Results}
\begin{table}[t!]
\centering
\caption{Tabular summary of Figures~\ref{fig:mnist_ood_optimizer} and \ref{fig:mnist_ood_p}}
\label{tab:ood_results}
\begin{subtable}{\textwidth}
\centering
\resizebox{\textwidth}{!}{%
\begin{tabular}{lccccccc}
\toprule
& SAM & GD & Adam & Sophia-H & AdaHessian & KFAC & L-BFGS \\
\midrule
\multicolumn{8}{c}{\textbf{Invariant: Digit; Spurious: Noise}} \\
\midrule
ID Val     & 99.34 $\pm$ 0.02 & 99.52 $\pm$ 0.03 & 99.98 $\pm$ 0.01 & 99.98 $\pm$ 0.05 & 99.96 $\pm$ 0.01 & 100.00 $\pm$ 0.00 & 99.97 $\pm$ 0.04 \\
OOD Test   & 67.29 $\pm$ 0.23 & 61.09 $\pm$ 0.34 & 49.72 $\pm$ 0.29 & 37.11 $\pm$ 0.31 & 36.44 $\pm$ 0.24 & 21.59 $\pm$ 0.46 & 36.21 $\pm$ 0.11 \\
\midrule
\multicolumn{8}{c}{\textbf{Invariant: Noise; Spurious: Digit}} \\
\midrule
ID Val     & 99.18 $\pm$ 0.26 & 99.46 $\pm$ 0.25 & 99.88 $\pm$ 0.08 & 99.93 $\pm$ 0.09 & 99.97 $\pm$ 0.06 & 100.00 $\pm$ 0.00 & 99.88 $\pm$ 0.11 \\
OOD Test   & 36.32 $\pm$ 0.29 & 47.79 $\pm$ 0.26 & 68.57 $\pm$ 0.36 & 71.53 $\pm$ 0.25 & 73.97 $\pm$ 0.20 & 88.69 $\pm$ 0.22 & 66.15 $\pm$ 0.22 \\
\bottomrule
\end{tabular}
}
\caption{Comparison across optimizers.}
\end{subtable}

\vspace{0.5em}

\begin{subtable}{\textwidth}
\centering
\resizebox{\textwidth}{!}{%
\begin{tabular}{lccccccc}
\toprule
& $p=1$ & $p=0.5$ & $p=0$ & $p=-0.5$ & $p=-1$ & $p=-1.5$ & $p=-2$ \\
\midrule
\multicolumn{8}{c}{\textbf{Invariant: Digit; Spurious: Noise}} \\
\midrule
ID Val     & 97.98 $\pm$ 0.46 & 98.93 $\pm$ 0.24 & 99.21 $\pm$ 0.38 & 99.74 $\pm$ 0.14 & 99.96 $\pm$ 0.10 & 99.98 $\pm$ 0.03 & 99.94 $\pm$ 0.10 \\
OOD Test   & 75.76 $\pm$ 0.20 & 70.65 $\pm$ 0.26 & 64.72 $\pm$ 0.35 & 50.18 $\pm$ 0.36 & 36.44 $\pm$ 0.24 & 22.98 $\pm$ 0.28 & 17.32 $\pm$ 0.55 \\
\midrule
\multicolumn{8}{c}{\textbf{Invariant: Noise; Spurious: Digit}} \\
\midrule
ID Val     & 98.16 $\pm$ 0.52 & 98.97 $\pm$ 0.31 & 99.16 $\pm$ 0.23 & 99.60 $\pm$ 0.29 & 99.97 $\pm$ 0.06 & 99.97 $\pm$ 0.04 & 99.97 $\pm$ 0.06 \\
OOD Test   & 23.71 $\pm$ 0.11 & 31.33 $\pm$ 0.19 & 35.47 $\pm$ 0.20 & 48.88 $\pm$ 0.29 & 73.97 $\pm$ 0.20 & 86.00 $\pm$ 0.23 & 85.82 $\pm$ 0.37 \\
\bottomrule
\end{tabular}
}
\caption{Comparison across preconditioning exponents $p$.}
\end{subtable}
\end{table}
\subsection{Robustness to Noise}
Figs.~\ref{fig:each_snr_cov_high}, \ref{fig:each_snr_cov_low}, \ref{fig:each_snr_adahessian_high}, and \ref{fig:each_snr_adahessian_low} show the training trajectories at other SNR levels corresponding to Fig.~\ref{fig:robustness_to_noise}.

\begin{figure}[p]
        \centering
        \begin{subfigure}{0.24\textwidth}
            \includegraphics[width=\linewidth]{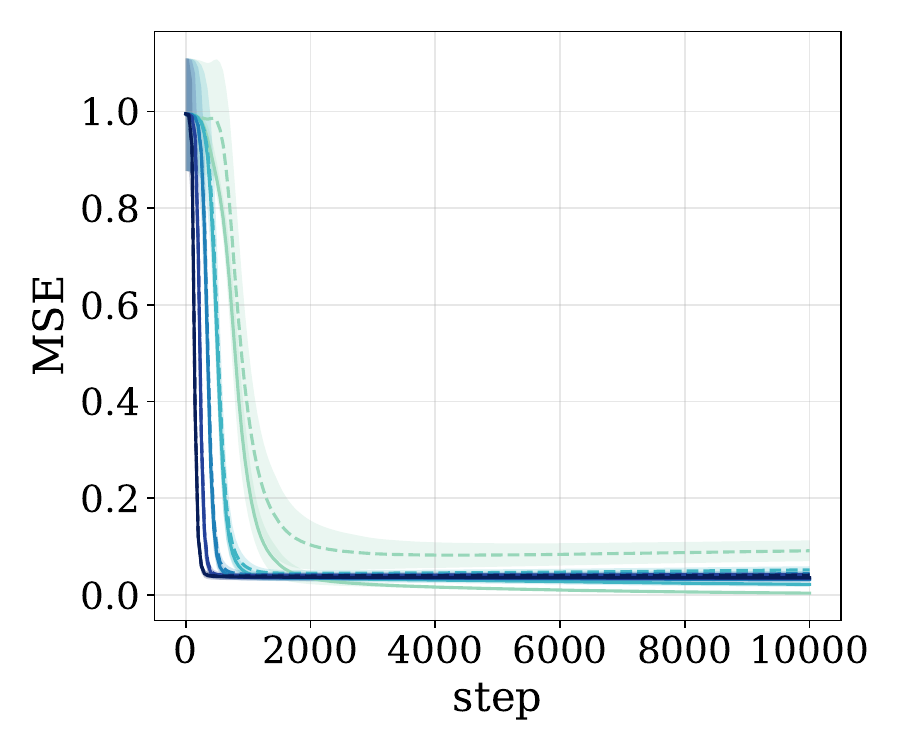}
            \caption{SNR=5}
        \end{subfigure}
        \begin{subfigure}{0.24\textwidth}
            \includegraphics[width=\linewidth]{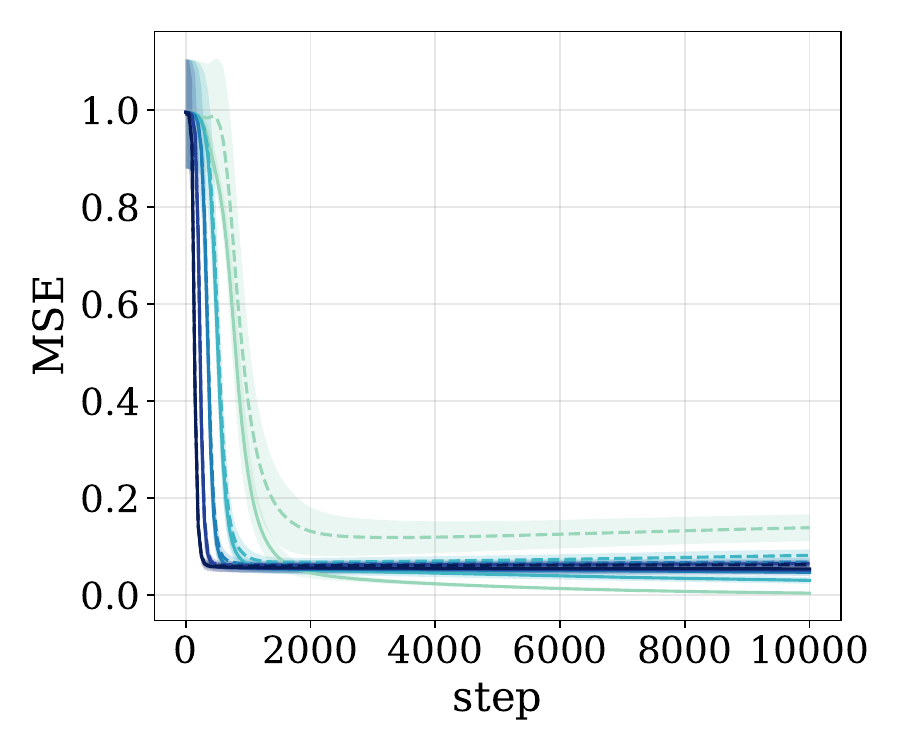}
            \caption{SNR=4}
        \end{subfigure} 
        \begin{subfigure}{0.24\textwidth}
            \includegraphics[width=\linewidth]{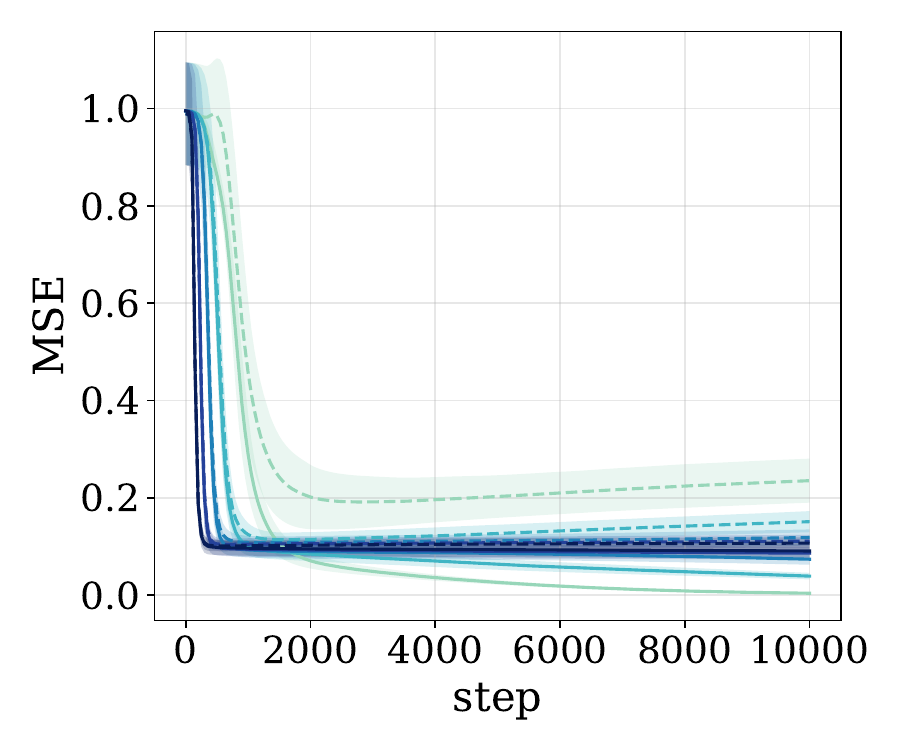}
            \caption{SNR=3}
        \end{subfigure}
        \begin{subfigure}{0.24\textwidth}
            \includegraphics[width=\linewidth]{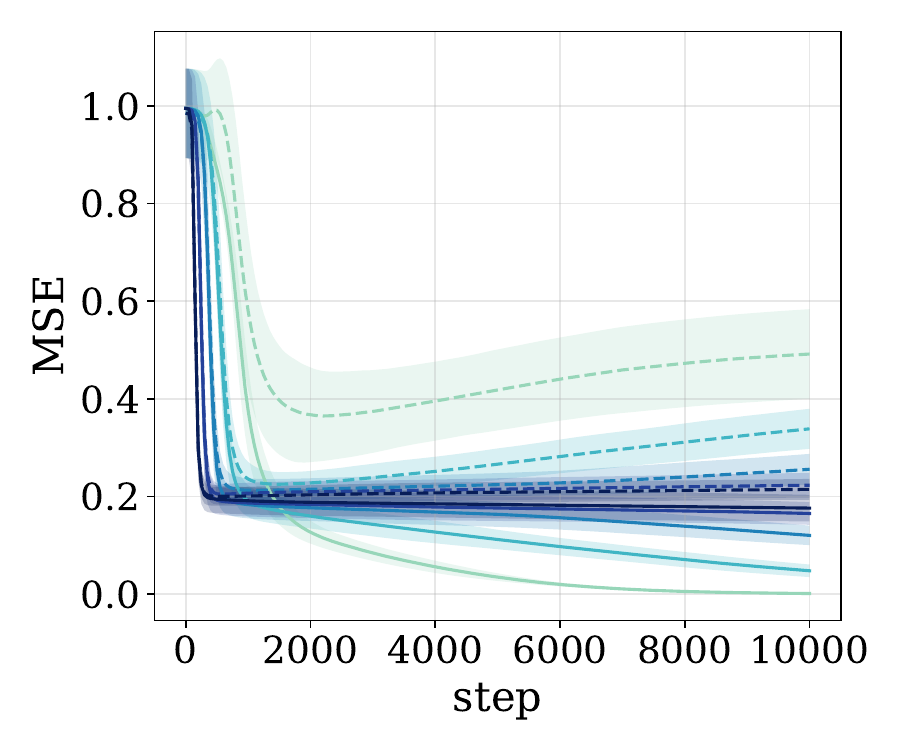}
            \caption{SNR=2}
        \end{subfigure}
    \begin{center}
    \scriptsize
    \setlength{\tabcolsep}{3pt}
    \begin{tabular}{@{}llllllllll@{}}
    \raisebox{.4pt}{\tikz{\draw[tp0, line width=1pt, solid]   (0,0) -- (0.4,0);}}   & $p=0$ (Train) &
    \raisebox{.4pt}{\tikz{\draw[tp05, line width=1pt, solid]  (0,0) -- (0.4,0);}}  & $p=-0.5$ (Train) &
    \raisebox{.4pt}{\tikz{\draw[tp1, line width=1pt, solid]   (0,0) -- (0.4,0);}}     & $p=-1$ (Train) &
    \raisebox{.4pt}{\tikz{\draw[tp15, line width=1pt, solid]  (0,0) -- (0.4,0);}}  & $p=-1.5$ (Train) &
    \raisebox{.4pt}{\tikz{\draw[tp2, line width=1pt, solid]   (0,0) -- (0.4,0);}}      & $p=-2$ (Train) \\
    \raisebox{.4pt}{\tikz{\draw[tp0, line width=1pt, dashed]   (0,0) -- (0.4,0);}}   & $p=0$ (Test) &
    \raisebox{.4pt}{\tikz{\draw[tp05, line width=1pt, dashed]  (0,0) -- (0.4,0);}}  & $p=-0.5$ (Test) &
    \raisebox{.4pt}{\tikz{\draw[tp1, line width=1pt, dashed]   (0,0) -- (0.4,0);}}     & $p=-1$ (Test) &
    \raisebox{.4pt}{\tikz{\draw[tp15, line width=1pt, dashed]  (0,0) -- (0.4,0);}}  & $p=-1.5$ (Test) &
    \raisebox{.4pt}{\tikz{\draw[tp2, line width=1pt, dashed]   (0,0) -- (0.4,0);}}      & $p=-2$ (Test)
    \end{tabular}
    \end{center}
    \caption{Training and test performance trajectories for each SNR level with the exact covariance preconditioner for different $p$ on Case~\setting{High}}
    \label{fig:each_snr_cov_high}
\end{figure}

\begin{figure}[p]
        \centering
        \begin{subfigure}{0.24\textwidth}
            \includegraphics[width=\linewidth]{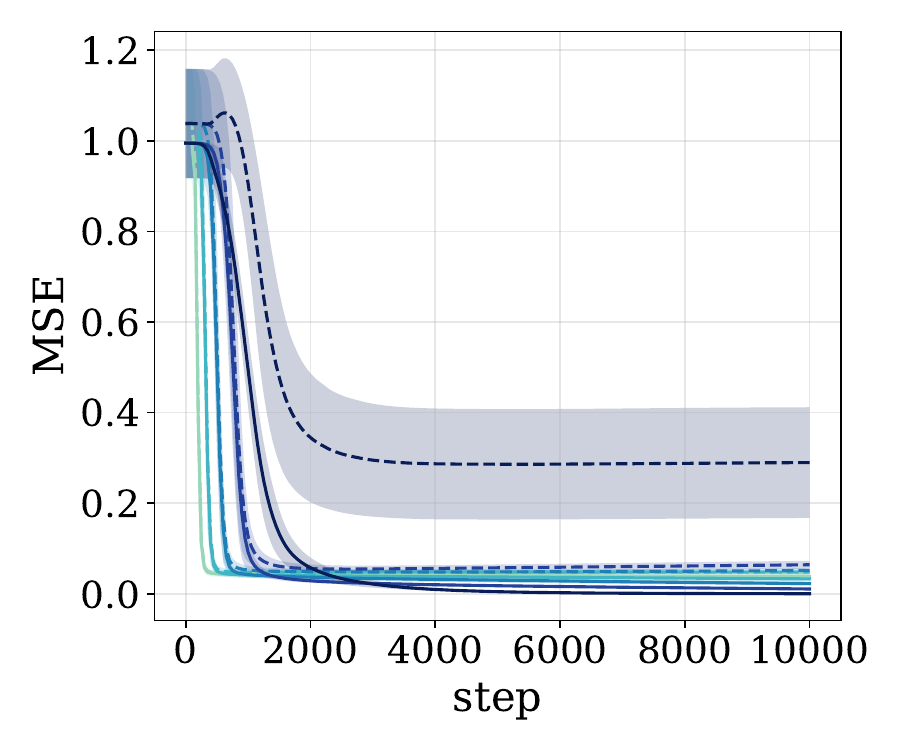}
            \caption{SNR=5}
        \end{subfigure}
        \begin{subfigure}{0.24\textwidth}
            \includegraphics[width=\linewidth]{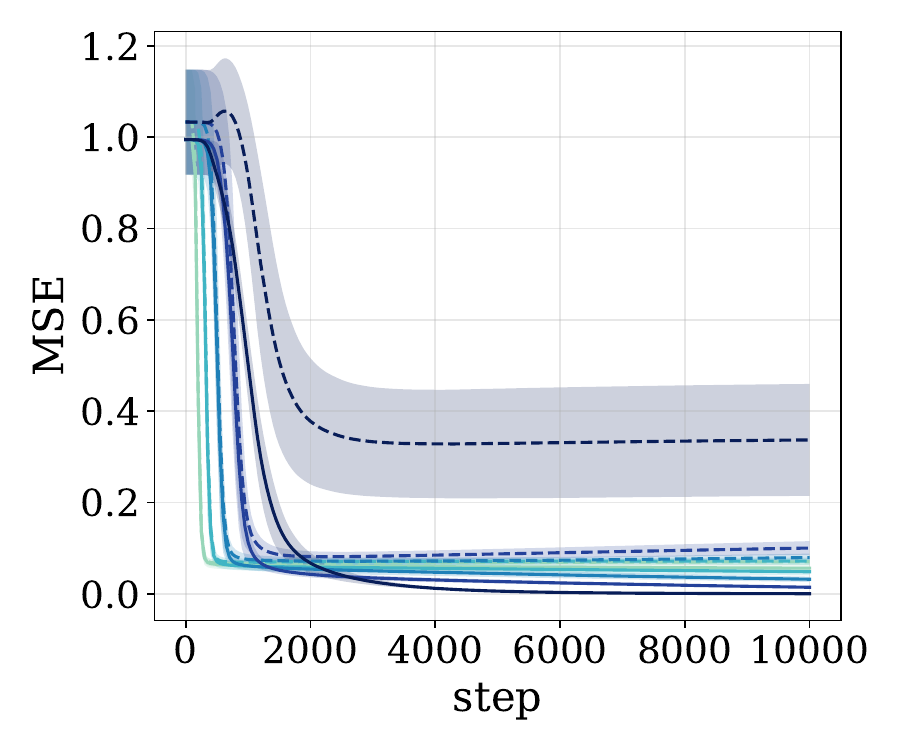}
            \caption{SNR=4}
        \end{subfigure} 
        \begin{subfigure}{0.24\textwidth}
            \includegraphics[width=\linewidth]{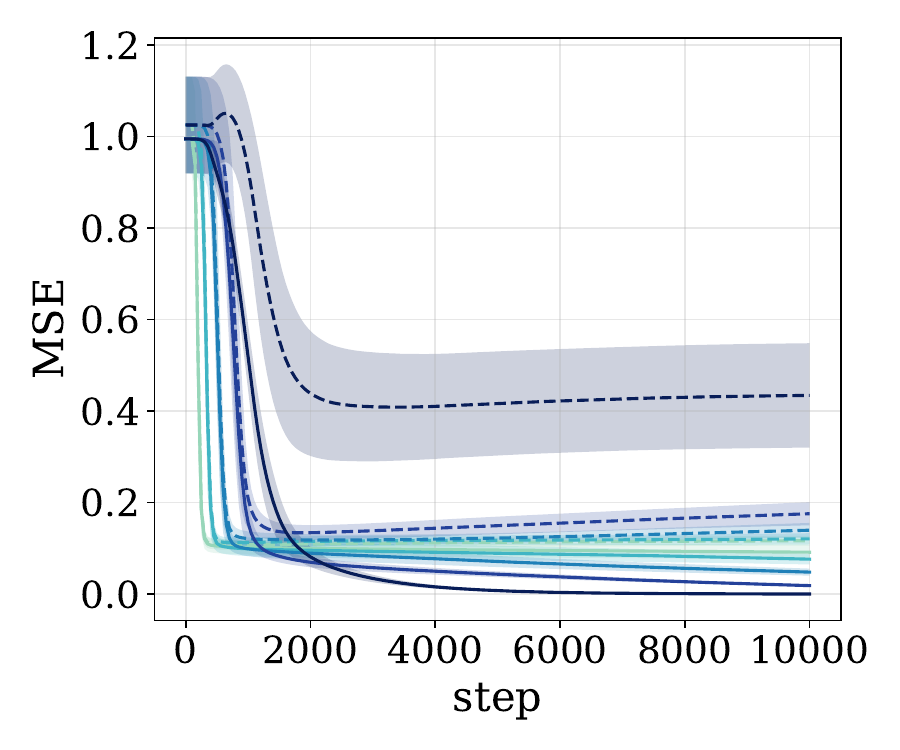}
            \caption{SNR=3}
        \end{subfigure}
        \begin{subfigure}{0.24\textwidth}
            \includegraphics[width=\linewidth]{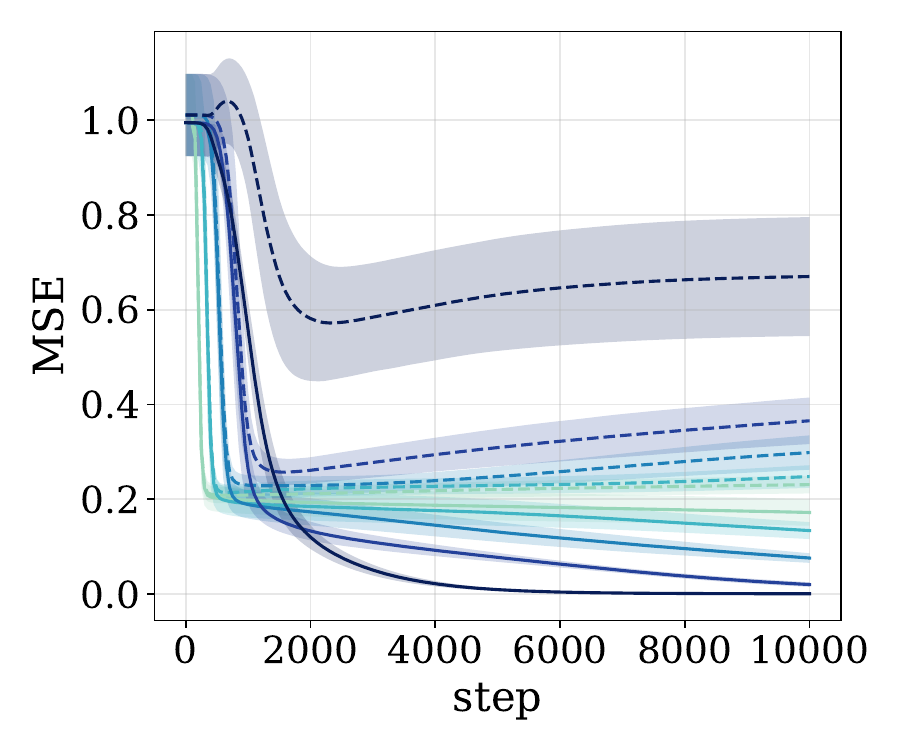}
            \caption{SNR=2}
        \end{subfigure}
    \begin{center}
    \scriptsize
    \setlength{\tabcolsep}{3pt}
    \begin{tabular}{@{}llllllllll@{}}
    \raisebox{.4pt}{\tikz{\draw[tp0, line width=1pt, solid]   (0,0) -- (0.4,0);}}   & $p=0$ (Train) &
    \raisebox{.4pt}{\tikz{\draw[tp05, line width=1pt, solid]  (0,0) -- (0.4,0);}}  & $p=-0.5$ (Train) &
    \raisebox{.4pt}{\tikz{\draw[tp1, line width=1pt, solid]   (0,0) -- (0.4,0);}}     & $p=-1$ (Train) &
    \raisebox{.4pt}{\tikz{\draw[tp15, line width=1pt, solid]  (0,0) -- (0.4,0);}}  & $p=-1.5$ (Train) &
    \raisebox{.4pt}{\tikz{\draw[tp2, line width=1pt, solid]   (0,0) -- (0.4,0);}}      & $p=-2$ (Train) \\
    \raisebox{.4pt}{\tikz{\draw[tp0, line width=1pt, dashed]   (0,0) -- (0.4,0);}}   & $p=0$ (Test) &
    \raisebox{.4pt}{\tikz{\draw[tp05, line width=1pt, dashed]  (0,0) -- (0.4,0);}}  & $p=-0.5$ (Test) &
    \raisebox{.4pt}{\tikz{\draw[tp1, line width=1pt, dashed]   (0,0) -- (0.4,0);}}     & $p=-1$ (Test) &
    \raisebox{.4pt}{\tikz{\draw[tp15, line width=1pt, dashed]  (0,0) -- (0.4,0);}}  & $p=-1.5$ (Test) &
    \raisebox{.4pt}{\tikz{\draw[tp2, line width=1pt, dashed]   (0,0) -- (0.4,0);}}      & $p=-2$ (Test)
    \end{tabular}
    \end{center}
    \caption{Training and test performance trajectories for each SNR level with the exact covariance preconditioner for different $p$ on Case~\setting{Low}}
    \label{fig:each_snr_cov_low}
\end{figure}

\begin{figure}[p]
        \centering
        \begin{subfigure}{0.24\textwidth}
            \includegraphics[width=\linewidth]{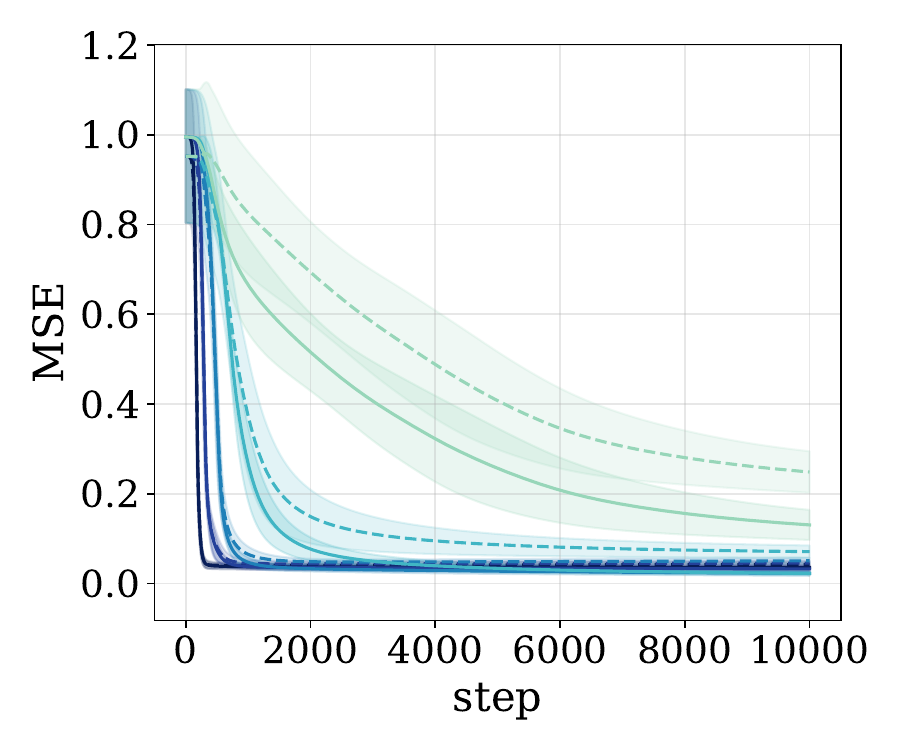}
            \caption{SNR=5}
        \end{subfigure}
        \begin{subfigure}{0.24\textwidth}
            \includegraphics[width=\linewidth]{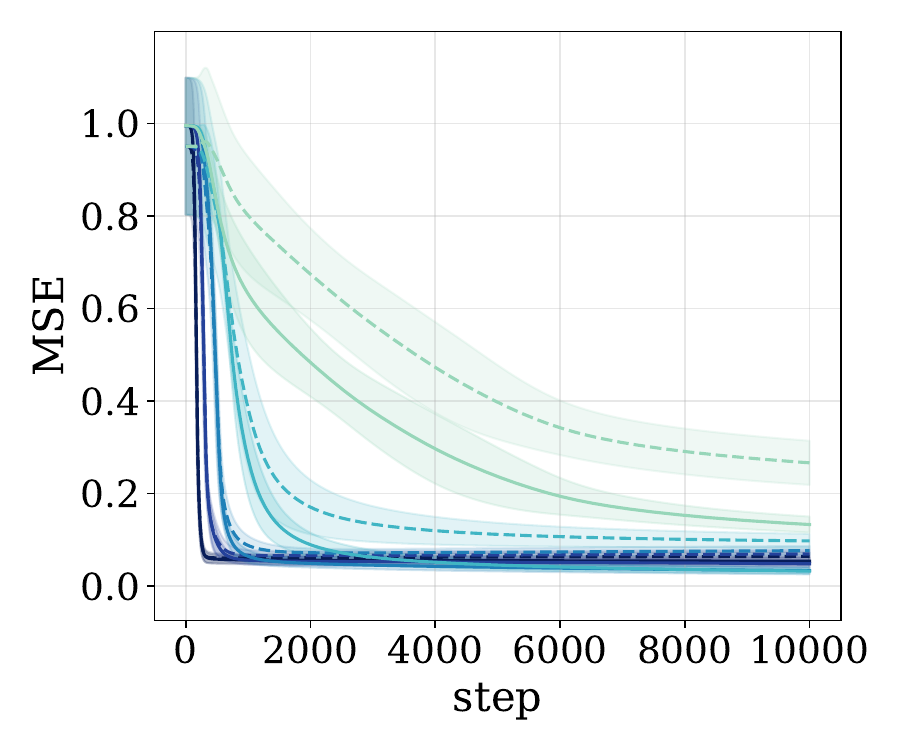}
            \caption{SNR=4}
        \end{subfigure} 
        \begin{subfigure}{0.24\textwidth}
            \includegraphics[width=\linewidth]{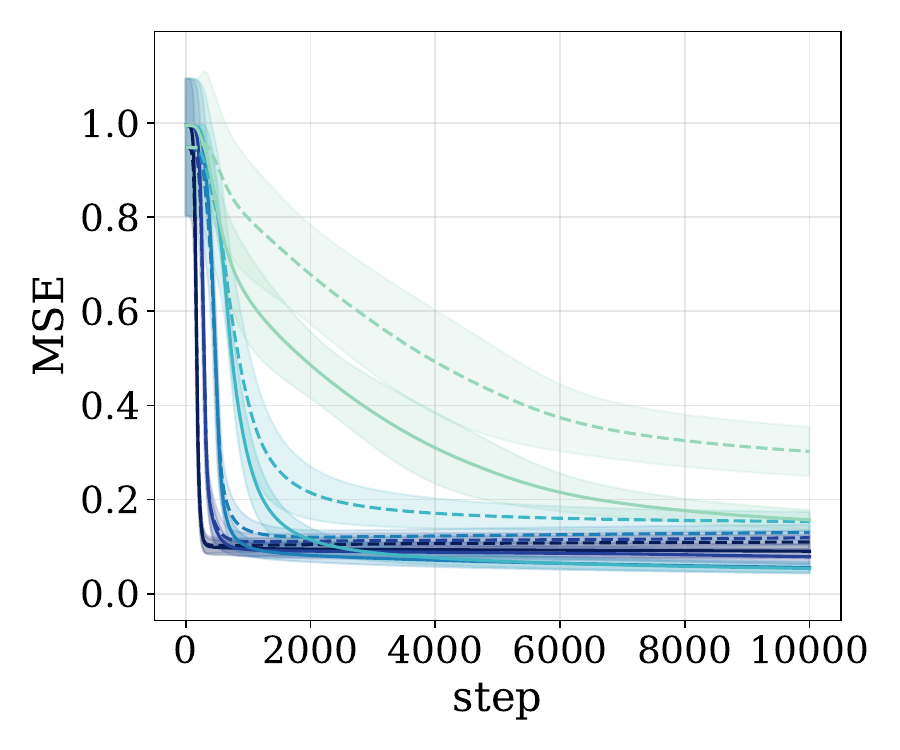}
            \caption{SNR=3}
        \end{subfigure}
        \begin{subfigure}{0.24\textwidth}
            \includegraphics[width=\linewidth]{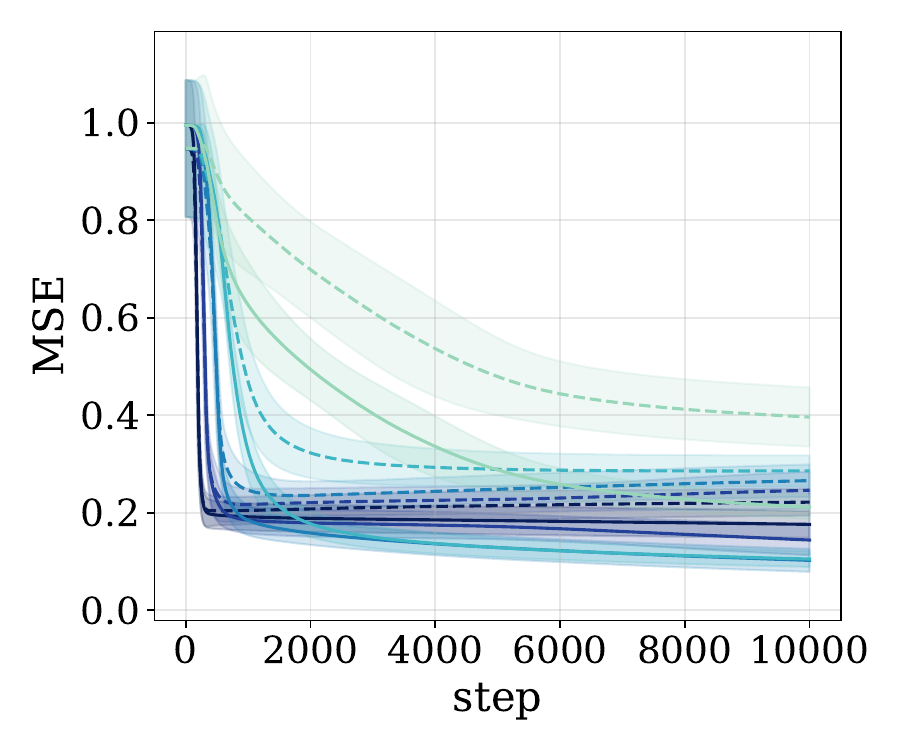}
            \caption{SNR=2}
        \end{subfigure}
    \begin{center}
    \scriptsize
    \setlength{\tabcolsep}{3pt}
    \begin{tabular}{@{}llllllllll@{}}
    \raisebox{.4pt}{\tikz{\draw[tp0, line width=1pt, solid]   (0,0) -- (0.4,0);}}   & $p=0$ (Train) &
    \raisebox{.4pt}{\tikz{\draw[tp05, line width=1pt, solid]  (0,0) -- (0.4,0);}}  & $p=-0.5$ (Train) &
    \raisebox{.4pt}{\tikz{\draw[tp1, line width=1pt, solid]   (0,0) -- (0.4,0);}}     & $p=-1$ (Train) &
    \raisebox{.4pt}{\tikz{\draw[tp15, line width=1pt, solid]  (0,0) -- (0.4,0);}}  & $p=-1.5$ (Train) &
    \raisebox{.4pt}{\tikz{\draw[tp2, line width=1pt, solid]   (0,0) -- (0.4,0);}}      & $p=-2$ (Train) \\
    \raisebox{.4pt}{\tikz{\draw[tp0, line width=1pt, dashed]   (0,0) -- (0.4,0);}}   & $p=0$ (Test) &
    \raisebox{.4pt}{\tikz{\draw[tp05, line width=1pt, dashed]  (0,0) -- (0.4,0);}}  & $p=-0.5$ (Test) &
    \raisebox{.4pt}{\tikz{\draw[tp1, line width=1pt, dashed]   (0,0) -- (0.4,0);}}     & $p=-1$ (Test) &
    \raisebox{.4pt}{\tikz{\draw[tp15, line width=1pt, dashed]  (0,0) -- (0.4,0);}}  & $p=-1.5$ (Test) &
    \raisebox{.4pt}{\tikz{\draw[tp2, line width=1pt, dashed]   (0,0) -- (0.4,0);}}      & $p=-2$ (Test)
    \end{tabular}
    \end{center}
    \caption{Training and test performance trajectories for each SNR level with the AdaHessian preconditioner for different $p$ on Case~\setting{High}}
    \label{fig:each_snr_adahessian_high}
\end{figure}

\begin{figure}[p]
        \centering
        \begin{subfigure}{0.24\textwidth}
            \includegraphics[width=\linewidth]{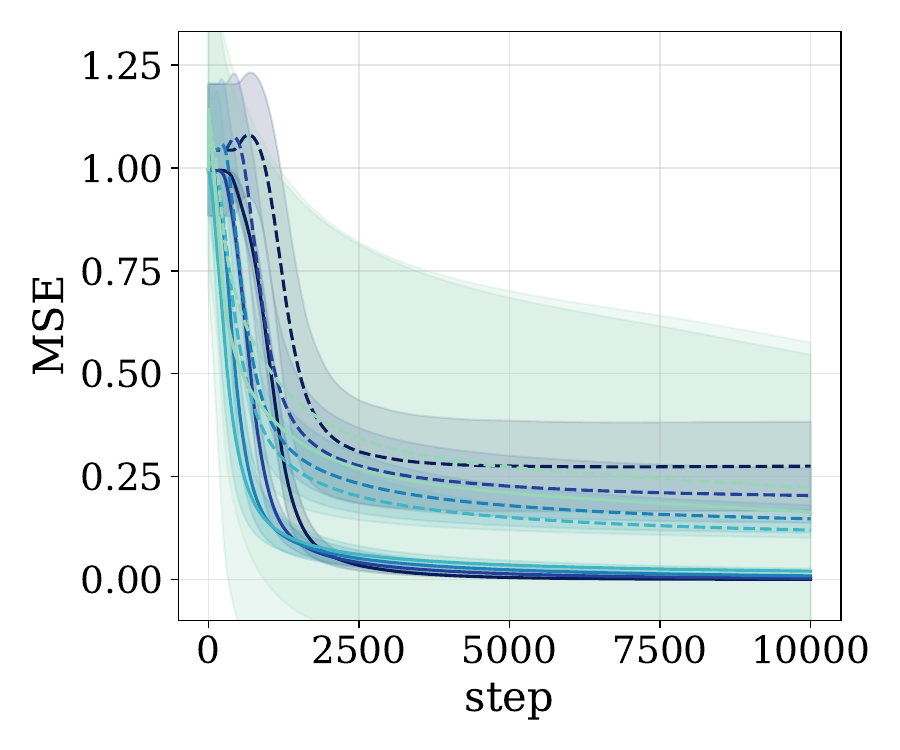}
            \caption{SNR=5}
        \end{subfigure}
        \begin{subfigure}{0.24\textwidth}
            \includegraphics[width=\linewidth]{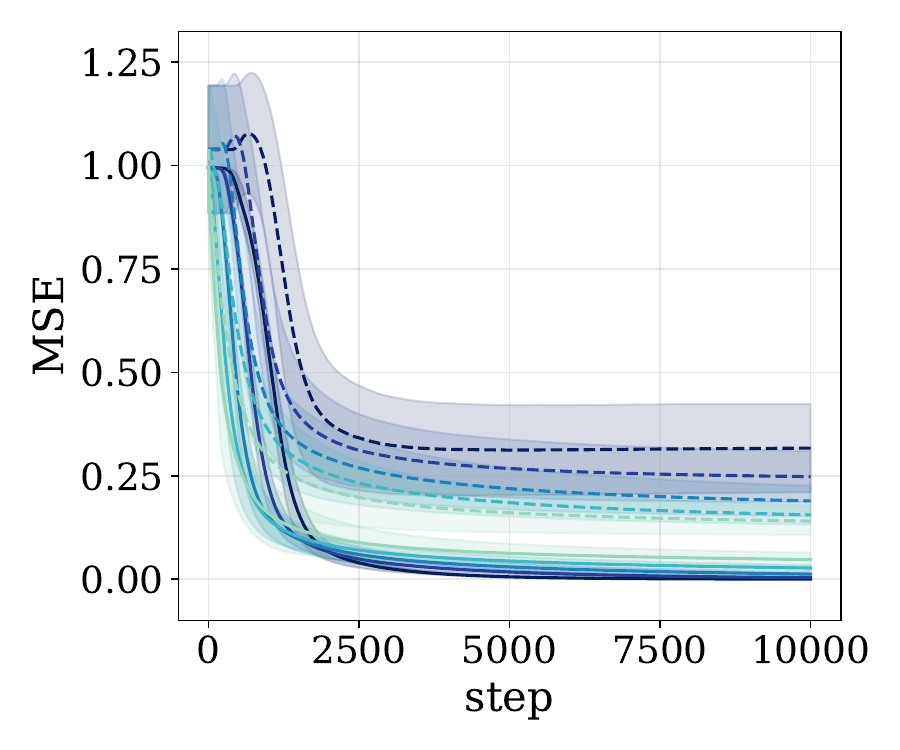}
            \caption{SNR=4}
        \end{subfigure} 
        \begin{subfigure}{0.24\textwidth}
            \includegraphics[width=\linewidth]{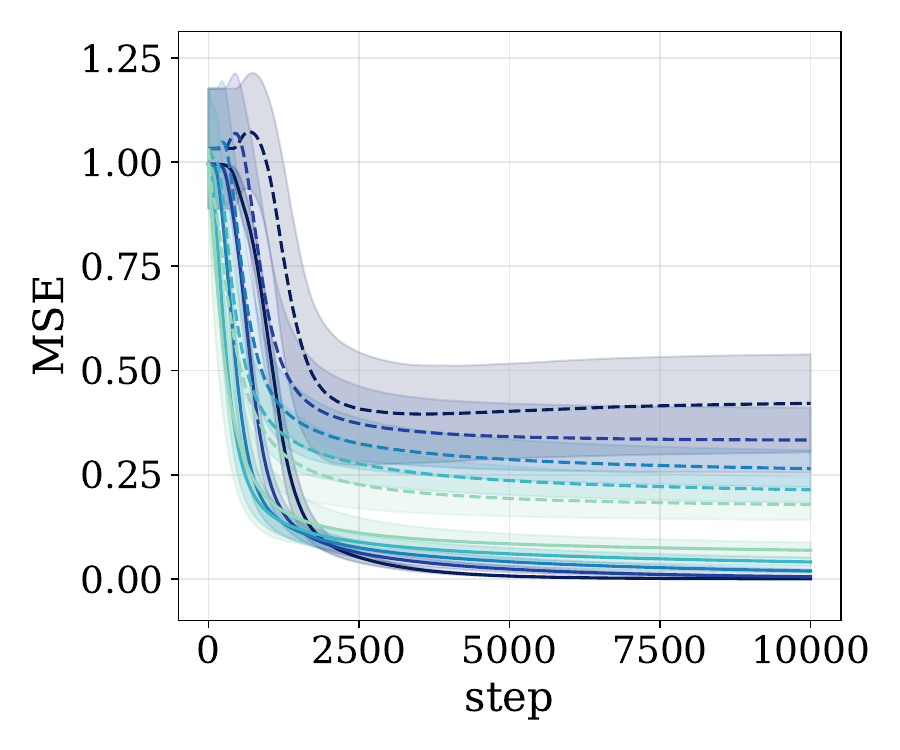}
            \caption{SNR=3}
        \end{subfigure}
        \begin{subfigure}{0.24\textwidth}
            \includegraphics[width=\linewidth]{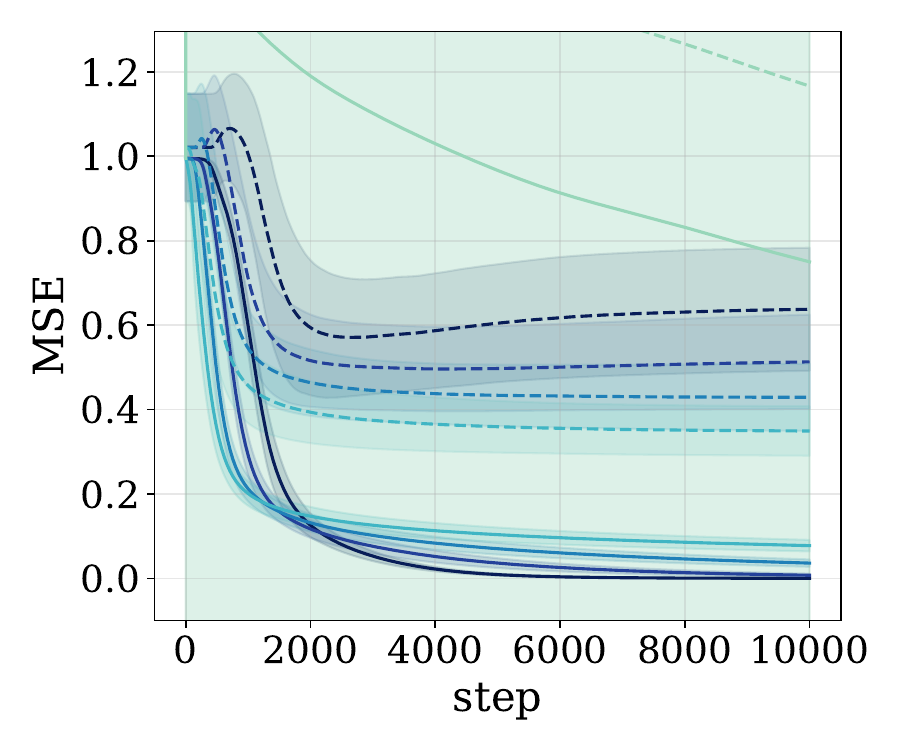}
            \caption{SNR=2}
        \end{subfigure}
    \begin{center}
    \scriptsize
    \setlength{\tabcolsep}{3pt}
    \begin{tabular}{@{}llllllllll@{}}
    \raisebox{.4pt}{\tikz{\draw[tp0, line width=1pt, solid]   (0,0) -- (0.4,0);}}   & $p=0$ (Train) &
    \raisebox{.4pt}{\tikz{\draw[tp05, line width=1pt, solid]  (0,0) -- (0.4,0);}}  & $p=-0.5$ (Train) &
    \raisebox{.4pt}{\tikz{\draw[tp1, line width=1pt, solid]   (0,0) -- (0.4,0);}}     & $p=-1$ (Train) &
    \raisebox{.4pt}{\tikz{\draw[tp15, line width=1pt, solid]  (0,0) -- (0.4,0);}}  & $p=-1.5$ (Train) &
    \raisebox{.4pt}{\tikz{\draw[tp2, line width=1pt, solid]   (0,0) -- (0.4,0);}}      & $p=-2$ (Train) \\
    \raisebox{.4pt}{\tikz{\draw[tp0, line width=1pt, dashed]   (0,0) -- (0.4,0);}}   & $p=0$ (Test) &
    \raisebox{.4pt}{\tikz{\draw[tp05, line width=1pt, dashed]  (0,0) -- (0.4,0);}}  & $p=-0.5$ (Test) &
    \raisebox{.4pt}{\tikz{\draw[tp1, line width=1pt, dashed]   (0,0) -- (0.4,0);}}     & $p=-1$ (Test) &
    \raisebox{.4pt}{\tikz{\draw[tp15, line width=1pt, dashed]  (0,0) -- (0.4,0);}}  & $p=-1.5$ (Test) &
    \raisebox{.4pt}{\tikz{\draw[tp2, line width=1pt, dashed]   (0,0) -- (0.4,0);}}      & $p=-2$ (Test)
    \end{tabular}
    \end{center}
    \caption{Training and test performance trajectories for each SNR level with the AdaHessian preconditioner for different $p$ on Case~\setting{Low}}
    \label{fig:each_snr_adahessian_low}
\end{figure}

\subsection{OOD generalization}

Table~\ref{tab:ood_results} presents the table corresponding to Figures~\ref{fig:mnist_ood_optimizer} and \ref{fig:mnist_ood_p}.

\end{document}